\theoremstyle{plain}
\newtheorem{theorem}{Theorem}
\newtheorem{proposition}{Proposition}
\newtheorem{assumption}{Assumption}
\newtheorem{definition}{Definition}
\newcommand{\reals}{\mathbb{R}}
\newcommand{\naturals}{\mathbb{N}}
\newcommand{\integers}{\mathbb{Z}}
\newcommand{\complex}{\mathbb{C}}
\newcommand{\Bcal}{\mathcal{B}}
\newcommand{\Ecal}{\mathcal{E}}
\newcommand{\Fcal}{\mathcal{F}}
\newcommand{\Ical}{\mathcal{I}}
\newcommand{\Ocal}{\mathcal{O}}
\newcommand{\Xcal}{\mathcal{X}}
\newcommand{\Pcal}{\mathcal{P}}
\DeclareMathOperator*{\argmin}{arg\,min}
\DeclareMathOperator*{\expect}{\mathbb{E}}
\newcommand{\prob}{\mathbb{P}}
\newcommand{\indicator}{\mathbbm{1}}
\newcommand{\identity}{\mathbf{I}}
\newcommand{\torus}{\mathbb{T}}
\newcommand{\imaginary}{\operatorname{i}}
\newcommand{\norm}[1]{\left\lVert#1\right\rVert}
\newcommand{\inner}[2]{\left\langle #1, #2 \right\rangle}
\begin{document}

\title{On the Benefits of Active Data Collection in Operator Learning}
\author{Unique Subedi, Ambuj Tewari}
\affil{Department of Statistics, University of Michigan}
\affil{\texttt{\{subedi, tewaria\}@umich.edu}}
\date{}

\maketitle
\begin{abstract}
We study active data collection strategies for operator learning when the target operator is linear and the input functions are drawn from a mean-zero stochastic process with continuous covariance kernels. With an active data collection strategy, we establish an error convergence rate in terms of the decay rate of the eigenvalues of the covariance kernel.  We can achieve arbitrarily fast error convergence rates with sufficiently rapid eigenvalue decay of the covariance kernels. This contrasts with the
passive (i.i.d.) data collection strategies, where the convergence rate is never faster than linear decay ($\sim n^{-1}$). In fact, for our setting, we show a \emph{non-vanishing} lower bound for any passive data collection strategy, regardless of the eigenvalues decay rate of the covariance kernel. Overall, our results show the benefit of active data collection strategies in operator learning over their passive counterparts.
\end{abstract}

\section{Introduction}
There is an increasing interest in using data-driven methods to estimate solution operators of partial differential equations (PDEs) encountered in scientific applications. To set up the problem, consider $\Xcal \subseteq \mathbb{R}^d$ and  a \emph{linear} PDE of the form $\mathcal{L} u = f$, subject to the boundary condition $u(x) = 0$ for all $x \in \text{boundary}(\Xcal)$. The goal, given a function $f$, is to find the corresponding solution $u$ that satisfies the PDE.

Traditionally, numerical PDE solvers are used to compute $u$ from $f$. In contrast, operator learning focuses on approximating the solution operator of the PDE \citep{lu2021learning, kovachki2023neural}. Under typical conditions, this PDE has a linear solution operator $\mathcal{F}$ such that $u = \mathcal{F}(f)$ for all $f$ in an appropriate function space. Given a set of training samples $(f_1, u_1), \ldots, (f_n, u_n)$, the aim is to learn an operator $\widehat{\mathcal{F}}_n$ that closely approximates the true solution operator $\mathcal{F}$ under a suitable metric. The hope is that for a new input function $f$, evaluating $\widehat{u} = \widehat{\mathcal{F}}_n(f)$ will be considerably faster than solving for $u$ through traditional methods, with minimal loss in accuracy.

In this work, we study the sample complexity of operator learning. Specifically, given an operator $\mathcal{F}$, how many input-output pairs $\{(f_j, \mathcal{F}(f_j))\}_{j=1}^n$ are necessary to estimate an operator $\widehat{\mathcal{F}}_n$ such that $\widehat{\mathcal{F}}_n(f) \approx \mathcal{F}(f)$ for all relevant $f$? This question has been studied in several specific contexts, such as for linear operators with fixed singular value decomposition by \citet{de2023convergence} and  \citet{subedi2024error}, for Lipschitz operators by \citet{liu2024deep}, and for the random feature model by \citet{nelsen2021random}. These are just a few representative works, and we refer the reader to \citep[Section 5]{kovachki2024operator} for a more comprehensive review of such sample complexity results.

A common theme of these sample complexity analyses is that they are conducted within the framework of traditional statistical learning \citep[Section 2.2]{kovachki2023neural}. In the statistical setting, the learner has access to training samples $\{(f_j, \mathcal{F}(f_j))\}_{j=1}^n$, where $f_j \sim_{\text{iid}} \mu$ from some probability measure $\mu$, and the objective is to produce an estimator $\widehat{\mathcal{F}}_n$ such that $\widehat{\mathcal{F}}_n(f) \approx \mathcal{F}(f)$ on average over test samples $f \sim \mu$. This scenario is also referred to as the passive learning setting. Under reasonable non-trivial assumptions, the best achievable rate of error convergence in this setting is $\sim  1/n$, when $\widehat{\mathcal{F}}_n$ is evaluated under an appropriate metric, say the $p$-th power of the $L^p_{\mu}$-Bochner norm.

\subsection{Our Contribution}

In this work, we go beyond the passive statistical setting and study operator learning where the learner is not restricted to iid samples from a source distribution and can use active data collection strategies. We focus on the case where the operator of interest $\Fcal$ is a bounded \emph{linear} operator and $\mu$ is a distribution with zero mean and the covariance structure defined by a continuous kernel $K$. Such distributions include Gaussian processes with common covariance kernels. For a given covariance kernel $K$, our main result provides an active data collection strategy, an estimation rule, and establishes an error bound for the proposed estimator in terms of the eigenvalue decay of the integral operator of $K$. Formally, if $\lambda_1\geq \lambda_2 \geq \ldots $ are the eigenvalues of the integral operator of $K$, there exists an active data collection strategy and estimation rule such that the estimator obtained using $n$ actively collected input-output pairs achieves the following error bound: 
\[ \varepsilon^2 \sum_{i=1}^n \lambda_i + \norm{\Fcal}_{\text{op}}^2 \sum_{i = n+1}^{\infty} \lambda_i. \]
The $\varepsilon^2$ captures the error of approximation oracle $\Ocal$ for $\Fcal$ that the learner has access to. For example, $\Ocal$ could be the PDE solver used to generate training data for operator learning. Generally, $\varepsilon>0$ is the irreducible error of the bound. The second term $O(\sum_{i>n} \lambda_i)$ is a reducible error, which goes to $0$ as $n \to \infty$ for continuous kernels $K$ under the bounded domain.   For example, for the covariance operator $\alpha (-\nabla^2 + \beta \identity)^{-\gamma}$ used by \citet{li2020fourier} and \citet{kovachki2023neural}, we show that the reducible error vanish at the rate $\lesssim  n^{-\left(\frac{2\gamma}{d}-1\right)}$. Taking $2\gamma \gg d$, one can achieve \emph{any} polynomial rate of decay. In fact, given any rate $R_n \to 0$ as $n \to \infty$, one can always construct a continuous kernel $K$ such that the reducible error decays faster than $R_n$. Thus, arbitrarily fast rates can be obtained using active data collection strategies.  Our main result is formalized in Theorem \ref{thm:upper}, and the proof is based on the celebrated Karhunen–Lo\`{e}ve decomposition for functions drawn from $\mu$ with covariance kernel $K$.

Furthermore, in Theorem \ref{thm:lower}, we show that, irrespective of the decay rate of the eigenvalues of the covariance kernel $K$, there always exists a bounded linear operator $\Fcal$ and a distribution $\mu$ with covariance kernel $K$ such that the minimax estimation error fails to converge to $0$ under \emph{any} passive (i.i.d.) data collection strategy. In particular, for every $n$, even when $\varepsilon=0$, we establish the minimax lower bound of 
\[ \norm{\Fcal}_{\text{op}}^2 \, \lambda_1\]
under any passive data collection strategy. That is, the lower bound does not vanish even as $n \to \infty$. Collectively, Theorems \ref{thm:upper} and \ref{thm:lower} establish a clear advantage of active data collection strategy for operator learning.

\subsection{Related Works}
Recent work by \citet{musekamp2024active} considers active methods for operator learning. However, in contrast to our approach of using linearity of the solution operator and the distributional family of interest, their methods rely on estimating uncertainty and identifying coreset. Additionally, their study is purely empirical and lacks theoretical guarantees. In a similar spirit, \citet{li2024multi} study using active learning to select input functions from multiresolution datasets to lower the data cost.

On the theoretical side, a closely related work is by \citet{kovachki2024data}, who allow for active data collection strategies. However, the upper bound in \citep[Theorem 3.3]{kovachki2024data} is derived assuming that input functions \(v_1, \ldots, v_n\) are drawn i.i.d. from \(\mu\). Their proof, based on standard empirical risk minimization (ERM) analysis, achieves a convergence rate that, at best, matches the Monte Carlo rate of \(n^{-1/2}\). Moreover, their lower bounds apply to both active and passive data collection strategies, suggesting that, for the nonparametric operator classes considered by \citet{kovachki2024data}, active learning provides no clear advantage over passive approaches. Exploring whether an adaptive data collection strategy, informed by the covariance of \(\mu\) and targeting smaller subclasses within these broad nonparametric classes, could yield faster convergence rates remains an interesting direction for future research.

Additionally, \citet{boulle2023elliptic} shares our objective of achieving faster convergence rates for PDEs with linear solution operators, but there are notable differences between their results and ours. First, their approach requires stronger control over the Hilbert-Schmidt norm of the operator \(\mathcal{F}\), whereas we only require control over the operator norm. Notably, the Hilbert-Schmidt norm can be arbitrarily larger than the operator norm. Second, their estimator uses the specific structure of \(\mathcal{F}\), particularly the Green's function, while we rely on black-box access to \(\mathcal{F}\) via an \(\varepsilon\)-approximate oracle. Lastly, although both approaches introduces a term measuring the quality of training data (\(\varepsilon\) in our bound and \(\Gamma_{\varepsilon}\) in theirs), their definition of \(\Gamma_{\varepsilon}\) is more technical and less intuitive. However, their guarantee is stronger, as their upper bound applies uniformly to any \(L^2\)-integrable input function, whereas our guarantees hold in expectation for inputs drawn from the distribution \(\mu\).

Our work considers the setting where $\mu$ is defined by a stochastic process with a specific covariance structure. Such a $\mu$ was taken to be a Gaussian process with mean zero and covariance given by $\alpha (-\nabla^2 + \beta \identity)^{-\gamma}$ in \citep{bhattacharya2021model, li2020fourier, kovachki2023neural}. The use of Karhunen–Lo\`{e}ve decomposition for generating input functions is also discussed by \citet[Section 4.1]{boulle2023mathematical}. Our upper bound also share conceptual similarities with results in \cite{lanthaler2022error, lanthaler2023operator}, who established approximation error bounds, rather than estimation, in terms of s eigenvalues of covariance operator. 

Finally, we highlight the ICML 2024 tutorial by \citet{azizzadenesheli2024}, who mentions active data collection as an important future direction for operator learning.  We also acknowledge the extensive literature on the learning-theoretic foundations of active learning \citep{settles2009active}. The active learning framework we adopt is known as the membership query model, which has a rich history in learning theory \cite{angluin1988queries}. A more detailed discussion of various active learning models within the learning theory literature is deferred to Section \ref{sec:active_traditional}.


\section{Preliminaries}
\subsection{Notation}
Let $\reals, \complex$ denote the set of real and complex numbers respectively. The set $\naturals$ and $\integers$ denote the natural numbers and integers. Define $\naturals_0 := \naturals \cup \{0\}$. For any $x \in \reals^d$, we use $|x|_p$ to denote the $\ell^p$ norm of $x$. Given a set $\Xcal \subseteq \reals^d$, we use $L^2(\Xcal)$ to denote the space of squared integrable \emph{real-valued} functions on $\Xcal$ under some base measure $\nu$. For any $u \in L^2(\Xcal)$, we define $\norm{u}_{L^2}^2 := \int_{\Xcal} |u(x)|^2 \, d \nu (x)$. The notation $\nu$ is reserved for the base measure on $\Xcal$, whereas $\mu$ will be used to denote the probability distribution over $L^2(\Xcal)$.
For a linear operator $\Fcal: L^2(\Xcal) \to L^{2}(\Xcal)$, we define $\norm{\Fcal}_{\text{op}} := \sup\{ \norm{\Fcal v}_{L^2} :  \norm{v}_{L^2}=1 \}.$ 

\subsection{Distribution Over Function Space}\label{sec:process}
Let $\Xcal \subseteq \reals^d$ be any compact set, $\Bcal(\Xcal)$ denote the Borel sigma-algebra, and $\nu$ denote some finite measure on $\Xcal$ (that is, $\nu(\Xcal)<\infty$). Generally, we will take $\nu$ to be Lebesgue measure on $\Xcal$ but sometimes it may be useful to take a weighted measure such as  $\propto e^{-\alpha^2 |x|^2} \, dx$. Denote $L^2(\Xcal, \Bcal(\Xcal), \nu)$ to be the set of all squared integrable functions on $\Xcal$. From here on, we will drop the dependence on $\Bcal(\Xcal)$ and $\nu$, and just write  $L^2(\Xcal)$.  Let $(\Omega, \Sigma, \mathbf{P})$ denote a probability space. We will consider a sequence of real-valued random variables $\{h_x\, :\, x \in \Xcal\}$ defined over the probability space  $(\Omega, \Sigma, \mathbf{P})$ that is centered, squared integrable, and has \emph{continous}  covariance kernel $K:\Xcal \times \Xcal \to \reals$. Recall that covariance kernels are symmetric and positive definite. More precisely, for any $x,y \in \Xcal$, the random variables $h_x$ satisfies
\begin{equation*}
    \begin{split}
       \expect[h_x] &= \int_{\Omega}\, h_x(\omega)\, d\mathbf{P}(\omega) =0 \\
       \expect[h_x^2] &= \int_{\Omega}\, |h_x(\omega)|^2\,  d\mathbf{P}(\omega) < \infty \\
       \expect[h_x\, h_y] &= \int_{\Omega} h_x(\omega)\, h_y(\omega)\, d\mathbf{P}(\omega) = K(y,x).
    \end{split}
\end{equation*}

Next, we use this process to define a probability distribution over $L^2(\Xcal)$. To that end, it will be more convenient to write the process as a function $h: \Xcal \times \Omega \to \reals$. By definition, $h(x, \cdot)$ is $\Sigma$-measurable for every $x \in \Xcal$. However, this is not enough to argue that $h$ is a random element of $L^2(\Xcal)$. Thus, to ensure measurability, we will only consider stochastic processes $h$ that satisfy the following: (i) The process $h$ is measurable with respect to product sigma algebra $\Bcal(\Xcal) \times \Sigma$ and (ii) For every $\omega \in \Omega$, the sample path $h(\cdot, \omega): \Xcal \to \reals$ is an element of $L^2(\Xcal)$.
Conditions (i) and (ii) ensure that $ \omega \mapsto h(\cdot, \omega)$ is a measurable function from $\Omega$ to $L^2(\Xcal)$ \citep[Theorem 7.4.1]{hsing2015theoretical}. In other words, $h$ is a $L^2(\Xcal)$ valued random variable. We can now meaningfully talk about probability distribution over $L^2(\Xcal)$ induced by the stochastic process $h$.

Accordingly, given a continuous covariance kernel $K$, let $\Pcal(K)$ denote the set of all centered and squared-integrable stochastic processes with covariance kernel $K$ indexed by $\Xcal$ that satisfies conditions (i) and (ii) above. With a slight abuse of notation, we will also use $\Pcal(K)$ to denote the set of all distributions over $L^2(\Xcal)$ induced by these stochastic processes. Each element $\mu \in \Pcal(K)$ is now a probability distribution over $L^2(\Xcal)$.

\subsection{Problem Setting and Goal}\label{sec:setting_goal}
Let $\Fcal : L^2(\Xcal) \to L^2(\Xcal)$ denote the operator of interest. One should think of $\Fcal$ as the solution operator of the PDE. The goal is to estimate a surrogate $\widehat{\Fcal}_n$ using $n$ input/output functions such that
\begin{equation}\label{eq:goal}
    \sup_{\mu \in \Pcal(K)}\, \expect_{v \sim \mu}\left[ \norm{\widehat{\Fcal}_n(v) - \Fcal(v)}_{L^2}^2\right]
\end{equation}
is small.  In the absence of additional knowledge about the image space of the solution operator \(\Fcal\), minimizing this objective is the most natural choice. Accordingly, for a fixed \(\mu\), the $L^p_{\mu}$-Bochner norm has been a standard error metric in the operator learning literature (see \citep[Section 2.2]{kovachki2023neural}, \citep{liu2024deep}). The \(p=2\) case, in particular, is of practical significance, as its empirical counterpart is the widely used mean squared loss. Regarding the family of probability distributions, our proposed family \(\Pcal(K)\) aims to unify and generalize marginal distributions on input functions commonly used in practice \cite{li2020fourier, lu2021learning}. This family also aligns with the recommendation of \citet{boulle2023mathematical}. Other families of probability distributions, such as the set of all compactly supported measures on a Hilbert space, have been used in theoretical analyses (e.g., \cite{liu2024deep}). Extending our result to include other distribution families of theoretical or applied interest is left for future work.


Throughout this work, we will assume that the learner knows the covariance kernel $K$.
\begin{assumption}\label{assumption1}
    The learner knows the kernel $K$.
\end{assumption}
Although not always explicitly stated, this has been a standard assumption in the operator learning literature. For example, \citet{li2020fourier} and \citet{kovachki2023neural} generate their input functions, both during training and testing, from a Gaussian process with the covariance kernel $K$ such that its associated integral operator is $\alpha (-\nabla^2 + \beta \identity)^{-\gamma}$ for some constants $\alpha, \beta, \gamma>0$. Thus, all the empirical performances observed in these works are in a setup similar to those described above. Additionally, \citet[Section 4.1.1]{boulle2023mathematical} also suggests generating source terms (input functions) from Gaussian processes with standard covariance kernels such as RBF, Mattern, etc.  

Once the input functions are generated, the learner has to use numerical solvers to PDE numerically in order to generate the solution function. In this work, we will make the following assumption about learner's access to the PDE solver.
\begin{assumption}\label{assumption2}
The learner only has black-box access to $\Fcal$ through an $\varepsilon$-approximate oracle $\Ocal$ that satisfies
    \[\sup_{v \in L^2(\Xcal)}\,\norm{\mathcal{O}(v) - \Fcal(v)}_{L^2}^2 \leq \varepsilon^2. \]
\end{assumption}

From an implementation standpoint, it might seem unnatural to consider $\Ocal(v)$ for a function $v \in L^2(\Xcal)$, especially since most PDE solvers usually only take function values over a discrete grid as an input. Nevertheless, the oracle is an abstract object, and the grid can be integrated into its definition. For example, given any function $v$, the oracle first extracts the values of $v$ on a grid $\{x_1, \ldots, x_m\}$ and produces output values on the same or a different grid. On the output side, the oracle may then construct an actual function, either through trigonometric interpolation or simply by setting the function values to zero outside the grid points. Thus, we do not specify these implementation details of the oracle and instead characterize it solely by accuracy parameter $\varepsilon$.

In general, $\varepsilon$ primarily reflects the discretization error for finite-difference type methods and truncation error for spectral methods, but it may also include measurement errors or errors resulting from the early stopping of some iterative routine. Therefore, $\varepsilon$ can be broadly viewed as quantifying the quality of the training data. From this perspective, $\varepsilon$ represents the irreducible error in \eqref{eq:goal}. Specifically, there exists a function $g: [0,\infty) \to [0, \infty)$ such that \eqref{eq:goal} is bounded below by $g(\varepsilon)$, even as $n \to \infty$. There is extensive literature that attempts to quantify $\varepsilon$ for various oracles (PDE solvers), and we can use these results readily to establish bounds on the irreducible error in our context. For example, for spectral solvers truncated to first $N$ basis functions where the input and output functions are $s$-times continuously differentiable, we typically have $\varepsilon \sim N^{-s}$.

\section{Upper Bounds Under Active Data Collection}
In Section \ref{sec:setting_goal}, we discussed the problem setting and the goal. Next, we specify how the learner can collect the training data $(v_1,w_1), \ldots, (v_n, w_n)$. In a departure from the standard statistical learning setting, where the training data is obtained as iid samples from the distribution under which the learner is evaluated, we investigate active data collection strategies. In active data collection strategies, the learner can pick \emph{any} source terms $v_1,\ldots, v_n$ and use the oracle to obtain $w_i = \Ocal(v_i)$. Since the goal is to provide guarantees under samples from the distribution $\mu \in \Pcal(K)$, the learner \emph{can} use the knowledge of $K$ to pick source terms. For a given oracle with accuracy $\varepsilon$, covariance kernel $K$, and the desired accuracy $\eta>0$, the goal of the learner is to develop an active data collection strategy for the source terms and an estimation rule to produce $\widehat{\Fcal}$ such that the accuracy of $\eta$ can be obtained with the fewest number of oracle calls. Or equivalently, achieve an optimal decay in the upperbound of \eqref{eq:goal} for $n \in \naturals$ number of oracle calls. Under this model, we provide an upperbound on \eqref{eq:goal} when $\Fcal$ is a bounded linear operator. 
\begin{theorem}[Upper Bound]\label{thm:upper}
  Suppose $\Fcal$ is a bounded linear operator. There exists a deterministic data collection strategy and a deterministic estimation rule such that the estimate $\widehat{\Fcal}_n$ produced after $n$ calls to oracle $\Ocal$ satisfies
   \[ \sup_{\mu \in \Pcal(K)} \expect_{v \sim \mu}\left[ \norm{\widehat{\Fcal}_n(v) - \Fcal(v)}_{L^2}^2\right] \leq \varepsilon^2 \sum_{i=1}^n \lambda_i + \norm{\Fcal}_{\emph{op}}^2 \sum_{i > n} \lambda_i. \]
   Here, $\lambda_1\geq \lambda_2 \geq \ldots $ are the eigenvalues of the integral operator of $K$ defined as $(\Ical_K v)(\cdot)= \int_{\Xcal}\, K(\cdot, x)\, v(x)\, d\nu(x). $ 
\end{theorem}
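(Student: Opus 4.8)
The plan is to use Mercer's theorem together with the Karhunen--Lo\`eve structure of any $\mu \in \Pcal(K)$, taking the eigenfunctions of $\Ical_K$ as the actively chosen source terms. Since $\Xcal$ is compact, $\nu$ is finite, and $K$ is continuous, Mercer's theorem gives an orthonormal basis $\{\phi_i\}_{i\geq 1}$ of $L^2(\Xcal)$ consisting of eigenfunctions of $\Ical_K$, with eigenvalues $\lambda_1 \geq \lambda_2 \geq \cdots \geq 0$ and $\sum_{i} \lambda_i = \int_{\Xcal} K(x,x)\,d\nu(x) < \infty$. The first step is to verify that for \emph{every} $\mu \in \Pcal(K)$ the covariance operator $C_\mu$ on $L^2(\Xcal)$, defined by $\inner{C_\mu f}{g} = \expect_{v\sim\mu}[\inner{v}{f}\,\inner{v}{g}]$, coincides with $\Ical_K$; this is a Fubini computation using $\expect[h_x h_y] = K(y,x)$ and the bound $\expect|h_x h_y| \leq \sqrt{K(x,x)K(y,y)}$, which is bounded on the compact $\Xcal$. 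In particular $C_\mu \phi_i = \lambda_i \phi_i$ and $C_\mu$ is trace class, \emph{independently} of which $\mu \in \Pcal(K)$ we fixed --- this is exactly what will make the final bound uniform over $\Pcal(K)$.

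The active strategy is then: query the oracle at $v_i = \phi_i$ for $i = 1,\ldots,n$, obtaining $w_i = \Ocal(\phi_i)$ with $\norm{w_i - \Fcal\phi_i}_{L^2} \leq \varepsilon$; the estimation rule is the rank-$n$ linear operator $\widehat{\Fcal}_n(v) = \sum_{i=1}^n \inner{v}{\phi_i}\,w_i$. Both are deterministic given $K$ and the oracle outputs. Writing $P_n$ for the orthogonal projection onto $\mathrm{span}(\phi_1,\ldots,\phi_n)$ and using linearity of $\Fcal$, I would decompose $\widehat{\Fcal}_n - \Fcal = E - D$, where $E(v) = \sum_{i=1}^n \inner{v}{\phi_i}(w_i - \Fcal\phi_i)$ carries the oracle error and $D = \Fcal(\identity - P_n)$ carries the truncation error.

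For the error analysis I would use the identity $\expect_{v\sim\mu}\norm{Tv}_{L^2}^2 = \trace(T C_\mu T^{*}) = \sum_{i\geq 1}\lambda_i \norm{T\phi_i}_{L^2}^2$ for any bounded operator $T$ (the first equality by Parseval plus Tonelli, the second since $C_\mu^{1/2}\phi_i = \sqrt{\lambda_i}\,\phi_i$ and $T C_\mu T^{*}$ is trace class). Applying this with $T = \widehat{\Fcal}_n - \Fcal = E - D$ and expanding, the cross term $\sum_i \lambda_i \inner{E\phi_i}{D\phi_i}$ vanishes: $E\phi_i = w_i - \Fcal\phi_i$ for $i \leq n$ and $0$ otherwise, while $D\phi_i = \Fcal\phi_i$ for $i > n$ and $0$ otherwise, so at least one factor in each inner product is zero. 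What remains is $\sum_{i=1}^n \lambda_i\norm{w_i - \Fcal\phi_i}_{L^2}^2 + \sum_{i>n}\lambda_i\norm{\Fcal\phi_i}_{L^2}^2 \leq \varepsilon^2\sum_{i=1}^n\lambda_i + \norm{\Fcal}_{\mathrm{op}}^2\sum_{i>n}\lambda_i$, using the oracle guarantee and $\norm{\phi_i}_{L^2}=1$. Since none of these bounds depends on the particular $\mu$, taking the supremum over $\Pcal(K)$ yields the claim.

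The main obstacle is not the idea but the functional-analytic bookkeeping: showing that each $\mu \in \Pcal(K)$ is a genuine $L^2(\Xcal)$-valued random element with a well-defined trace-class covariance operator equal to $\Ical_K$ (this is where conditions (i)--(ii) of Section~\ref{sec:process} are used), and justifying the trace identity and the interchange of expectation with the infinite Parseval sum so that the manipulations above are rigorous. Everything else --- Mercer's decomposition, the orthogonality that kills the cross term, and the two termwise bounds --- is routine once that foundation is in place.
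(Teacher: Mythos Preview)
Your proposal is correct and uses the same data-collection strategy and estimator as the paper, but the error analysis follows a genuinely different route. The paper works pointwise via the Karhunen--Lo\`eve expansion: it introduces a truncation $\Pi_m(v)=\sum_{j\le m}\sqrt{\lambda_j}\,\xi_j\,\varphi_j$, splits $\expect\norm{(\widehat{\Fcal}_n-\Fcal)v}^2$ into a term on $\Pi_m(v)$ and a remainder on $v-\Pi_m(v)$, controls the cross term by Cauchy--Schwarz, and finally sends $m\to\infty$ using the uniform KL convergence statement $\sup_x\expect|v(x)-\Pi_m(v)(x)|^2\to 0$ to kill the remainder. Your approach instead identifies $C_\mu=\Ical_K$ once, invokes the trace identity $\expect\norm{Tv}^2=\sum_i\lambda_i\norm{T\phi_i}^2$, and then splits the \emph{operator} $\widehat{\Fcal}_n-\Fcal=E-D$ so that the cross terms vanish exactly because $E$ and $D$ are supported on complementary blocks of eigenfunctions. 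This is cleaner: it avoids the auxiliary parameter $m$ and the limit argument, and it shows that only the identity $C_\mu=\Ical_K$ (a Fubini computation) is needed rather than the stronger uniform KL convergence. One small caveat: Mercer's theorem gives an orthonormal system spanning the closure of the range of $\Ical_K$, not necessarily all of $L^2(\Xcal)$ if $\Ical_K$ has a nontrivial kernel; but this is harmless for your trace identity, since any basis vectors in $\ker \Ical_K$ contribute zero to $\sum_i\lambda_i\norm{T\phi_i}^2$ anyway.
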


The first term above is the irreducible error, which depends on the quality of the training data. For the second term, \citet[Theorem 4.6.7]{hsing2015theoretical} implies that
\[
\sum_{i=1}^{\infty}\lambda_i = \int_{\Xcal} K(x,x)\, d\nu(x) \leq \sup_{x}|K(x,x)| \, \nu(\Xcal) < \infty.
\]
This is finite because \(\nu\) is a finite measure on \(\Xcal\), and \(K(x,x)\) is a continuous function on a compact domain, making it bounded. As a result, the second term in the upper bound of Theorem \ref{thm:upper} vanishes as \(n \to \infty\). In Section \ref{sec:examples}, we apply Theorem \ref{thm:upper} to derive precise rates for several common covariance kernels.

\subsection{Data Collection Strategy and The Estimator}\label{sec:estimator}
Here, we specify the data collection strategy and the estimator that achieves the claimed guarantee in Theorem \ref{thm:upper}. Let $\{\lambda_j, \varphi_j\}_{j=1}^{\infty}$ be the sequence of eigenpairs of $K$ defined by solving the Feldholm integral equation
    \[\int_{\Xcal}K(y,x)\, \varphi_j(x)\, d\nu(x) = \lambda_j \, \varphi_j(y), \quad \quad y,x \in \Xcal.\]
Given the Oracle call budget of $n$, the input functions that the learner selects are $\varphi_1, \varphi_2, \ldots, \varphi_n$ as source terms. For each $i \in [n]$, the learner makes an oracle call and obtains $w_i = \Ocal(\varphi_i).$ Then, we consider the estimator
\[\widehat{\Fcal}_n:= \sum_{i=1}^n w_i \otimes \varphi_i.\]
More precisely, this estimation rule yields an operator $\widehat{\Fcal}_n$ such that $\widehat{\Fcal}_n v = \sum_{i=1}^n w_i \inner{\varphi_i}{v}_{L^2}$ for any $v \in L^2(\Xcal)$. Appendix \ref{appdx:estimator} provides an overview of the process for deriving this estimator starting from a least-squares estimation rule. Furthermore, Appendix \ref{appdx:eigenapproximation} discusses methods for approximating the eigenfunctions \(\varphi_i\) when the Fredholm integral equation cannot be solved exactly.

\subsection{Sketch of a Proof of Theorem \ref{thm:upper}}
We now provide a high-level, non-rigorous sketch of a proof of Theorem \ref{thm:upper}, and defer a full proof to Appendix \ref{appdx:proof_upper}. 

To bound the risk of the estimator specified above, we first rewrite the risk using Karhunen–Lo\`{e}ve Theorem. Pick any $v \sim \mu$. Since $v$ is defined using a centered and squared-integrable stochastic process with continuous covariance kernel $K$, the celebrated Karhunen–Lo\`{e}ve Theorem \citep[Theorem 7.3.5]{hsing2015theoretical} states that
\[v(\cdot) = \sum_{j=1}^{\infty}\, \sqrt{\lambda_j}\, \xi_j \, \varphi_j(\cdot), \]
where $\xi_j$'s are random variables defined as  $\xi_j := \frac{1}{\sqrt{\lambda_j}}\, \int_{\Xcal} v(x)\, \varphi_j(x)\, d\nu(x). $
It turns out that $\xi_j$'s are uncorrelated random variables with mean $0$ and variance $1$. That is, $\expect[\xi_j]=0$ and $\expect[\xi_i\, \xi_j ] = \indicator[i= j].$

This decomposition allows us to rewrite expectation over $\mu$ in terms of expectation over the randomness of the sequence $(\xi_j)_{j \geq 1}$, which is more tractable.
For simplicity, assume that $\varepsilon=0$. Then, using Karhunen–Lo\`{e}ve expansion, we can show that
\[\scalebox{0.9}{ $ \expect_{v \sim \mu}\left[ \norm{\widehat{\Fcal}_n(v) - \Fcal(v)}_{L^2}^2\right] \leq \expect_{\xi}\left[ \norm{\Fcal\left( \sum_{j>n} \sqrt{\lambda_j} \,\xi_j \,\varphi_j\right) }_{L^2}^2\right] $ },\]
which can then be further upper bounded by $\norm{\Fcal}_{\text{op}}^2 \sum_{j >n} \lambda_j$ using properties of $\xi_i$'s.

There are two primary challenges in completing this argument in a fully rigorous manner. First, we must address the fact that the oracle $\Ocal$ is only $\varepsilon$-approximate for any $\varepsilon>0$. Second, the convergence statement for $\sum_{j=1}^{\infty} \sqrt{\lambda_j} \xi_j \varphi_j(\cdot)$ is quite specific, requiring careful attention when applying this result.

\subsection{Examples of Covariance Kernels}\label{sec:examples}
To make the upperbound in Theorem \ref{thm:upper} more concrete, let us consider a few specific covariance kernels $K$. While not all claims are rigorously proven in this subsection, a detailed and formal treatment of the material can be found in Appendix \ref{appdx:examples}.

\subsubsection{Fractional Inverse of Shifted Laplacian}\label{example:fisLaplacian}
\citet{li2020fourier, kovachki2023neural} generated input functions from $\text{GP}(0, \alpha (-\nabla^2 + \beta \identity)^{-\gamma})$
 for some constants $\alpha, \beta, \gamma>0$. Here, $\nabla^2$ is the Laplacian operator defined as
 \[\nabla^2 v = \sum_{j=1}^d \frac{\partial^2 v}{\partial x_j^2}. \]
In this section, we will consider $\Xcal$ to be a $d$-dimensional periodic torus $\torus^d$ and the base measure $\nu$ is Lebesgue.  We identify $\torus^d$ by $[0,1]^d$ with periodic boundary conditions. 
 Let us define a function $\varphi_m: \torus^d \to \complex $ as $\varphi_m(x) = e^{2\pi \imaginary m \cdot x}$ for every $m \in \integers^d$. Recall that $\varphi_m$ is the eigenfunction of $\nabla^2$ with eigenvalue $-4\pi^2 |m|_2^2$. In particular, 
\[\nabla^2 e^{2\pi \imaginary m \cdot x} = \sum_{j=1}^d \frac{\partial^2}{\partial x_j^2}\, e^{2\pi \imaginary m \cdot x} =  - 4\pi^2 |m|_2^2\, e^{2\pi \imaginary m \cdot x}. \]
Since $\{\varphi_m\, :\, m \in \integers^d\}$ forms a complete orthonormal system in $L^2(\torus^d)$, there are no other eigenfunctions of $ \nabla^2$. A simple algebra shows that $\varphi_m$'s are also the eigenfunctions of $(-\nabla^2 +\beta \identity)^{-\gamma}$ with eigenvalues being $(\beta +  4\pi^2 |m|_2^2)^{-\gamma}$. 

Using this fact in Theorem \ref{thm:upper} yields the upper bound 
\[\leq \varepsilon^2  \left( \alpha  \beta^{-\gamma} + \alpha + \frac{\alpha }{2\gamma-d}\right) + \frac{\alpha \norm{\Fcal}_{\text{op}}^2 }{2\gamma-d}\, \,  \frac{1}{n^{\frac{2\gamma}{d}-1}}. \]
When $2\gamma/d-1 >0$, the reducible error above goes to $0$ when $n \to \infty$. 
Again, as an example, \citet{li2020fourier} uses $\alpha = 7^{3/2}$, $\beta = 49$ and $\gamma=2.5$ in their experiment for $2d$-Navier Stokes. In this case, $2\gamma/d= 2.5$, yielding the convergence rate of $ n^{-1.5}$ for the reducible error. Note that this rate is faster than the usual passive statistical rate of $1/n$. However,  for any value $\tau$, one can take $\gamma = d(\tau+1)/2$ to get the rate of $n^{-\tau}$. Thus, every polynomial rate is possible depending on the choice of $\gamma$. 

\subsubsection{RBF Kernel}
Let $\Xcal = \reals$ and $K(x,y) = \exp\left( - \frac{1}{2\ell^2} |x-y|^2\right)$ for all $ x,y \in \reals$ and $\ell>0$. For now, let $\nu$ is a Gaussian measure with mean $0$ and variance $\sigma^2$ on $\reals$. Using the known results on eigenfunctions of RBF kernel in terms of Hermite polynomials \citep[Section 4.3.1]{williams2006gaussian}, we show that there exists $\gamma \in (0,1)$ such that the upper bound in Theorem \ref{thm:upper} is
\[\leq \frac{1}{(1-\gamma)}\left( \varepsilon^2 + \norm{\Fcal}_{\text{op}}^2 \gamma^n\right).\]
That is,  the reducible error vanishes exponentially fast as $n \to \infty$. In Appendix \ref{appdx:examples}, we also show that a rate faster than any polynomial rate can be achieved for RBF kernel on $\reals^d$.

\subsubsection{Brownian Motion}
Let us consider the case where $\Xcal =[0,1]$, the base measure $\nu$ is Lebegsue, and the stochastic process in Section \ref{sec:process} is Brownian motion. Recall that the Brownian motion is a Gaussian process with covariance kernel $K(s,t) = \min(s,t)$ for all $ s,t \in [0,1].$
It is well-known \citep[Example 4.6.3]{hsing2015theoretical} that the eigenfunctions of $K$ can be written in terms of sine waves. A simple analysis can then be used to establish an upper bound of
\begin{equation*}
 \leq \frac{\varepsilon^2}{2} + \norm{\Fcal}_{\text{op}}^2\, \frac{1}{\pi^2}\, \frac{2}{2n-1}.
\end{equation*}
Therefore, the reducible error vanishes at rate $\sim n^{-1}$.

\subsection{Comparison to Traditional Active Learning}\label{sec:active_traditional}
The active learning framework we adopt in this work is referred to as the membership query model, which has a longstanding history in the learning theory literature \cite{angluin1988queries, angluin2001queries}. However, in traditional learning settings, the membership query model—where the learner can request labels for any unlabeled instance—is generally unrealistic. For example, in the context of human data, it may not be feasible to generate a label for an individual with an arbitrary feature vector, as such a person may not exist in reality. As a result, other active learning frameworks, such as the stream-based sampling model \cite{atlas1989training} and the pool-based model \cite{lewis1994sequential, hanneke2013statistical}, have gained prominence in the recent literature. These models restrict the learner to requesting labels for instances sampled from a specific distribution, making them more practical for many real-world applications. For a comprehensive review of active learning models, their history, and key results, we refer readers to \cite{settles2009active}. That said, we believe that the membership query model is the right model for developing surrogates for solution operators of PDEs. This is because a PDE solver can provide a solution to any query of an input function within an appropriate function space.

\section{Lower Bounds on Passive Learning}
In this section, we establish a lower bound on \eqref{eq:goal} for any passive data collection strategy. As usual, the kernel $K$ is known to the learner. Nature selects a distribution $\mu_{\star} \in \Pcal(K)$, and the learner receives $n$ i.i.d. samples $v_1, v_2, \ldots, v_n \sim \mu_{\star}$. For each $i \in [n]$, the learner queries the oracle $\Ocal$ to produce $w_i = \Ocal(v_i)$. It is important to emphasize that the learner can only make oracle calls for the i.i.d. samples $v_1, v_2, \ldots, v_n$. If the learner were allowed to make oracle calls for other input functions, the learner could simply disregard these i.i.d. samples and implement the ``active strategy" from Section \ref{sec:estimator}. Such restriction on oracle calls still includes most passive learning rules of interest, such as arbitrary empirical risk minimization (ERM), regularized least-squares estimators, and parametric operators trained with stochastic gradient descent.

Using these $n$ training points $\{(v_i, w_i)\}_{i \leq n}$, the learner then constructs an operator $\widehat{\Fcal}_n$. Since the learner only has access to samples from $\mu_{\star}$, it is unrealistic to expect a uniform guarantee over the entire family $\Pcal(K)$ as established in Theorem \ref{thm:upper}. Therefore, in this section, the learner will be evaluated solely under the distribution $\mu_{\star}$. The objective is to minimize the expected loss under $\mu_{\star}$, defined as
\[
\expect_{v_{1:n} \sim \mu_{\star}^n}\left[\expect_{v \sim \mu_{\star}}\left[ \norm{\widehat{\Fcal}_n(v) - \Fcal(v)}_{L^2}^2\right] \right].
\]

Moreover, establishing any meaningful lower bound on this risk requires imposing some restriction on the oracle $\Ocal$. To understand why, consider the case where $\Fcal$ is a finite-rank operator that only maps to the span of $\{\psi_1, \psi_2, \ldots, \psi_N\}$ for some orthonormal sequence $\psi_1, \ldots, \psi_N$ in $L^2(\Xcal)$. Now, consider an oracle $\Ocal$ such that for any $v \in L^2(\Xcal)$, it outputs
\[
\Ocal(v) = \Fcal(v) + \chi\,  \psi_{N+1},
\]
where $\chi \in \mathbb{R}$ and $\psi_{N+1}$ is a unit norm function in $L^2(\Xcal)$ that is orthogonal to all $\psi_j$ for $1 \leq j \leq N$. If $|\chi| \leq \varepsilon$, it is easy to see that
\[
\sup_{v \in L^2(\Xcal)} \norm{\Ocal(v) - \Fcal(v)}_{L^2}^2 = \norm{\chi \, \psi_{N+1}}_{L^2}^2 = |\chi|^2 \leq \varepsilon^2.
\]
Thus, $\Ocal$ is a valid oracle according to Assumption \ref{assumption2}. However, in principle, it is possible to encode the entire identity of $\Fcal$ in a real number $\chi$. Thus, the learner could determine the identity of $\Fcal$ with just a single call to $\Ocal$, making any attempt at establishing a lower bound futile.

This problem may still persist even when $\varepsilon=0$. Consider the case where $\nu$ is the Lebesgue measure, and the oracle is of the form
\[
\Ocal(v) = \Fcal(v) + \chi \indicator[x = x_0]
\]
for some $x_0 \in \Xcal$. Then, for any $v \in L^2(\Xcal)$, we have  $\norm{\Ocal(v) - \Fcal(v)}_{L^2}^2 = \norm{\chi  \indicator\{x=x_0\}}_{L^2}^2 = 0$
as $\nu(\{x_0\}) = 0$. This shows that the oracle can still reveal the identity of $\Fcal$ in regions of the domain that have zero measure under $\nu$. Therefore, to avoid these pathological edge cases, we will assume that the oracle is perfect.

\begin{definition}[Perfect Oracle]
$\Ocal$ is a perfect oracle for $\Fcal$ if, for every $v \in L^2(\Xcal)$, we have 
    \[
    \big(\Ocal(v)\big)(x) = \big(\Fcal(v)\big)(x) \quad \forall x \in \Xcal.
    \]
\end{definition}
\noindent In other words, the perfect oracle $\Ocal$ produces exactly the same function that $\Fcal$ does—nothing more, nothing less. With this assumption, we are in the usual realizable setting often considered in statistical learning theory. That is, the learner has access to $n$ samples $\{(v_i, \Fcal(v_i))\}_{i=1}^n$, where $v_1, \ldots, v_n$ are drawn iid from some distribution $\mu$.

Theorem \ref{thm:lower} provides a lower bound on the risk of any estimator under such passive data collection strategy.
\begin{theorem}[Lowerbound]\label{thm:lower}
Fix any continuous covariance kernel $K$ with eigenvalues $\lambda_1\geq \lambda_2 \geq \ldots$. Then, there exists  a solution operator $\Fcal$, accessible to the learner through a perfect oracle $\Ocal$, such that the following holds: for every $n \in \naturals$, there exists a distribution $\mu \in \Pcal(K)$ such that, under any estimation rule within a passive data collection strategy, the risk of the resulting estimator $\widehat{\Fcal}_n$ is
\[ \expect_{v_{1:n} \sim \mu^n} \left[\expect_{v \sim \mu}\left[ \norm{\widehat{\Fcal}_n(v) - \Fcal(v)}_{L^2}^2\right] \right] \geq \frac{\norm{\Fcal}_{\emph{op}}^2}{2}\, \sum_{j=1}^m \lambda_j \]
for every fixed $m \in \naturals$. 
\end{theorem}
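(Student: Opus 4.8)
The plan is to prove this as a two-point (Le~Cam) lower bound. I would take the ``hard'' pair $\Fcal_+ = a\,\identity$ and $\Fcal_- = -a\,\identity$ for an arbitrary scale $a>0$, so that $\norm{\Fcal_\pm}_{\text{op}} = a$, each accessed through its own perfect oracle; and I would take the adversarial distribution $\mu$ to be degenerate enough that a passive learner obtains, with constant probability, \emph{no} information about which of the two operators is in force. Concretely, given $n$, fix a small $p = p(n)\in(0,1)$ and let $\mu=\mu_p$ be the mixture that outputs the identically-zero function with probability $1-p$ and outputs $p^{-1/2}\,u$ with probability $p$, where $u$ is a draw from a fixed reference process $\mu_0\in\Pcal(K)$ (for instance the Gaussian process with covariance $K$, which lies in $\Pcal(K)$ since $K$ is continuous on the compact set $\Xcal$, hence the process is mean-square continuous and admits a jointly measurable version with $L^2(\Xcal)$ sample paths). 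Then $\expect_\mu[v(x)] = 0$ and $\expect_\mu[v(x)v(y)] = p\cdot p^{-1}K(x,y) = K(x,y)$, and $\mu_p$ inherits joint measurability and $L^2$ sample paths from $\mu_0$, so $\mu_p\in\Pcal(K)$.

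The heart of the argument is an indistinguishability claim. Let $E$ be the event that all $n$ i.i.d.\ training inputs equal the zero function, so $\prob_{\mu^n}(E)=(1-p)^n$. Since $\Fcal_\pm$ are linear and the oracle is perfect, on $E$ every training pair equals $(0,0)$ regardless of which operator is in force; hence, conditioned on $E$, the law of the learner's output $\widehat{\Fcal}_n$ is identical under the two hypotheses, and I would couple the two experiments so that $\widehat{\Fcal}_n$ is literally the same randomized operator on $E$. Applying the parallelogram identity $\norm{x-y}_{L^2}^2 + \norm{x+y}_{L^2}^2 = 2\norm{x}_{L^2}^2 + 2\norm{y}_{L^2}^2 \geq 2\norm{y}_{L^2}^2$ pointwise with $x=\widehat{\Fcal}_n(v)$ and $y=av$ then gives, for any estimation rule,
\begin{align*}
&\expect_{v_{1:n}\sim\mu^n}\expect_{v\sim\mu}\big[\norm{\widehat{\Fcal}_n(v)-\Fcal_+(v)}_{L^2}^2\big] + \expect_{v_{1:n}\sim\mu^n}\expect_{v\sim\mu}\big[\norm{\widehat{\Fcal}_n(v)-\Fcal_-(v)}_{L^2}^2\big]\\
&\qquad \geq \prob(E)\cdot 2a^2\,\expect_{v\sim\mu}\norm{v}_{L^2}^2 .
\end{align*}
Now $\expect_{v\sim\mu}\norm{v}_{L^2}^2 = \expect_{u\sim\mu_0}\norm{u}_{L^2}^2 = \int_\Xcal K(x,x)\,d\nu(x) = \trace(\Ical_K) = \sum_{j\geq 1}\lambda_j$, by Tonelli and the trace formula quoted after Theorem~\ref{thm:upper}. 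Choosing $p=p(n)$ small enough that $(1-p)^n\geq\tfrac12$ (any $p\leq 1-2^{-1/n}$ works, and this $\mu_p$ depends on $n$ only), and passing from the average over $\sigma\in\{+,-\}$ to the maximum, I get
\[
\max_{\sigma\in\{+,-\}}\ \expect_{v_{1:n}\sim\mu^n}\expect_{v\sim\mu}\big[\norm{\widehat{\Fcal}_n(v)-\Fcal_\sigma(v)}_{L^2}^2\big]\ \geq\ (1-p)^n\,a^2\sum_{j\geq 1}\lambda_j\ \geq\ \frac{\norm{\Fcal}_{\text{op}}^2}{2}\sum_{j=1}^m\lambda_j
\]
for every $m\in\naturals$, since $a=\norm{\Fcal}_{\text{op}}$ and $\sum_{j\geq 1}\lambda_j\geq\sum_{j=1}^m\lambda_j$. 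Thus one of the two operators in the pair witnesses the claim.

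The main obstacle is conceptual rather than computational: since a rule that ignores its data and outputs a fixed operator has zero risk against that very operator, ``under any estimation rule'' cannot literally mean ``for one fixed $\Fcal$'' — the statement is inherently a two-point lower bound (equivalently, a Bayes-risk bound under the uniform prior on $\{\Fcal_+,\Fcal_-\}$), and ``there exists a solution operator $\Fcal$'' must be read in that minimax sense. The only genuinely technical point is verifying that the degenerate mixture $\mu_p$ meets the measurability conditions (i)--(ii) of Section~\ref{sec:process} and has covariance exactly $K$; the indistinguishability on $E$, the parallelogram step, and the trace computation are all short. An alternative ``hard pair'' is $\pm$ a rank-$m$ partial isometry onto $\mathrm{span}\{\varphi_1,\dots,\varphi_m\}$, which reproduces the $\sum_{j=1}^m\lambda_j$ form exactly but forces $\Fcal$ to depend on $m$; the $\pm a\,\identity$ choice above gives the uniform-in-$m$ statement with no extra work.
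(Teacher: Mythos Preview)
Your proof is correct and takes a genuinely different, and in several respects simpler, route than the paper's.

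The paper constructs an infinite family $\Fcal_\sigma = c\sum_j \sigma_j\,\varphi_j\otimes\varphi_j$ indexed by $\sigma\in\{-1,1\}^{\naturals}$, together with a distribution $\mu$ whose Karhunen--Lo\`eve coefficients are independent three-point random variables (values $\pm p^{-1/2}$ with probability $p/2$ each, $0$ otherwise). It then conditions on the event $E_{n,m}$ that the first $m$ coefficients of all $n$ samples vanish, which has probability $(1-p)^{nm}$, and sets $p=\tfrac{1}{2mn}$; a coordinate-wise triangle-inequality argument over $\sigma_1,\dots,\sigma_m$ yields the bound. By contrast, you collapse the family to the single pair $\pm a\,\identity$, take $\mu$ to be a mixture of the zero function and a rescaled reference process in $\Pcal(K)$, and condition on the simpler event that all $n$ samples equal zero; the parallelogram identity then replaces the coordinate-wise step in one line.

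Your approach buys two things. First, your hard distribution $\mu_p$ depends only on $n$, whereas in the paper's proof $p=\tfrac{1}{2mn}$ makes $\mu$ depend on $m$ as well --- a slight mismatch with the quantifier order announced in the theorem statement. Second, you obtain $\tfrac{a^2}{2}\sum_{j\geq 1}\lambda_j$ directly, which dominates $\tfrac{a^2}{2}\sum_{j=1}^m\lambda_j$ for every $m$ without further work. What the paper's richer family $\{\Fcal_\sigma\}$ could in principle buy --- though it is not exploited for the stated theorem --- is more flexibility if one wanted lower bounds against operator classes with additional constraints, or bounds that track $n$; for the present non-vanishing lower bound your two-point reduction is the cleaner tool. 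Your closing remark on the quantifier issue (that the witnessing $\Fcal$ necessarily depends on the estimation rule via the probabilistic-method step) is apt and applies equally to the paper's own proof.
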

Specifically, for \( m = 1 \), we obtain a lower bound of \( \frac{1}{2}\, \norm{\Fcal}_{\text{op}}^2 \lambda_1 \). This provides a non-vanishing lower bound for any non-trivial solution operator \( \Fcal \) and covariance kernel \( K \). Our lower bound is constructive, as we explicitly construct a difficult distribution for the learner. The complete proof is presented in Appendix \ref{appdx:proof_lower}.

\section{Experiments}\label{sec:experiments}
In this section, we conduct numerical studies comparing our active data collection strategy with passive data collection (random sampling) for learning solution operators for the Poisson and Heat Equations. For the actively collected data, we implement the \emph{linear estimator} defined in Section \ref{sec:estimator}. On the other hand, for passively collected data, we use a least-squares estimator, where the pseudoinverse is computed numerically. Recall that, given input-output functions $\{v_i, w_i\}_{i=1}^n$, the least-squares estimator has a form $L = \left(\sum_{i=1}^n w_i \otimes v_i\right) \left( \sum_{i=1}^n v_i \otimes v_i\right)^{\dagger}$. For the actively collected data in Section \ref{sec:estimator}, the $v_i$'s are orthogonal, which yielded a simple and natural pseudoinverse (see Appendix \ref{appdx:estimator}). However, for the passively collected data, the $v_i$'s may not be orthogonal anymore and the pseudoinverse does not have a nice closed form. Thus, we use standard numerical techniques to compute the pseudo-inverse $\left( \sum_{i=1}^n v_i \otimes v_i\right)^{\dagger}$.  However, in practice, one rarely uses linear estimators for passively collected data. Thus, we also compare our method against the Fourier Neural Operator \citep{li2020fourier}, the most popular architecture for operator learning. All the code is available at \texttt{https://github.com/unique-subedi/active-operator-learning}.

\subsection{Poisson Equation}
Let \(\Xcal = [0,1]^2\). Consider Poisson equation with Dirichlet boundary conditions: 
\[
-\nabla^2 u = f, \quad u(x) = 0 \quad \forall x \in \text{boundary}(\Xcal),
\]
where \(\nabla^2\) is the Laplace operator. The objective is to learn the solution operator that maps the source function \(f\) to the solution \(u\). This solution operator is the inverse of the Laplacian, which is a compact linear operator since \(\Xcal\) is bounded. For the passive data collection strategy, the input functions \(f\) are independently sampled as \(f \sim \text{GP}(0, 50^2(-\nabla^2 + \identity)^{-2})\), where \(\text{GP}\) denotes Gaussian Process.

The solution \(u\) is computed using the finite-difference method. Both linear estimators and Fourier Neural Operators (FNO) are trained on \(n\) such independently sampled pairs \((f, u)\). For testing, 100 additional source functions \(f \sim \text{GP}(0, 50^2(-\nabla^2 + \identity)^{-2})\) are generated, and their corresponding solutions \(u\) are also obtained via the finite-difference method. Both active and passive estimators are evaluated on this test set, with the performance measured using the mean-squared relative error:

\[
\text{Error} = \frac{1}{n_{\text{test}}}\sum_{i=1}^{n_{\text{test}}} \frac{\norm{u_i^{\text{true}} - u_i^{\text{predicted}}}^2_{L^2}}{\norm{u_i^{\text{true}}}^2_{L^2}}.
\]

We report the relative error instead of the absolute error to normalize for potential arbitrary scaling due to the norms of the true solution function.
The FNO model has four Fourier layers and \(N/2\) Fourier modes, where \(N\) denotes the number of grid points along each spatial dimension. In our experiments, all computations are carried out on a \(64 \times 64\) grid, so \(N = 64\).  Figures \eqref{fig:Poisson_error} and \eqref{fig:Poisson_error_log} show the testing error as a function of the training sample size. The performance of FNO on active data is not included in this figure due to its poor results. However, Figure \eqref{fig:Poisson_appdx} in the Appendix includes the error curve for FNO trained with active data, alongside the results for passive data.

\begin{figure}[H]

\begin{center}
\includegraphics[width=0.6\linewidth]{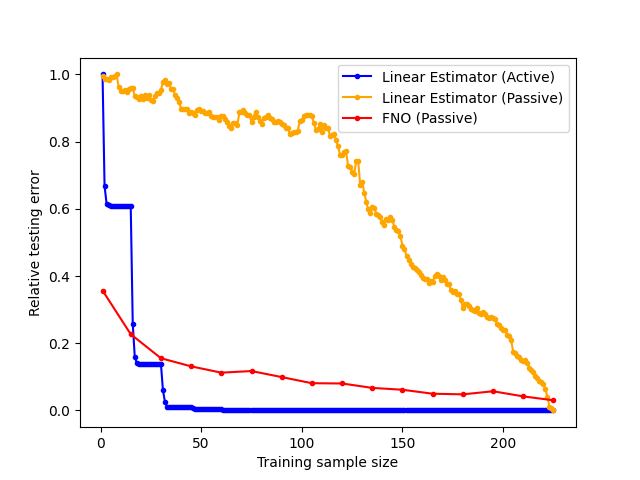}
\caption{Error Plots for various estimators for Poisson Equation. The blue curve shows the performance of our linear estimator on actively collected data. The orange and red curves include the linear estimator's and FNO's performance on passively collected data. Figure \ref{fig:Poisson_error_log} shows the same plot in $\log$-scale.}
\label{fig:Poisson_error}
\end{center}
\end{figure}

\begin{figure}[H]
\begin{center}
\includegraphics[width=0.6\linewidth]{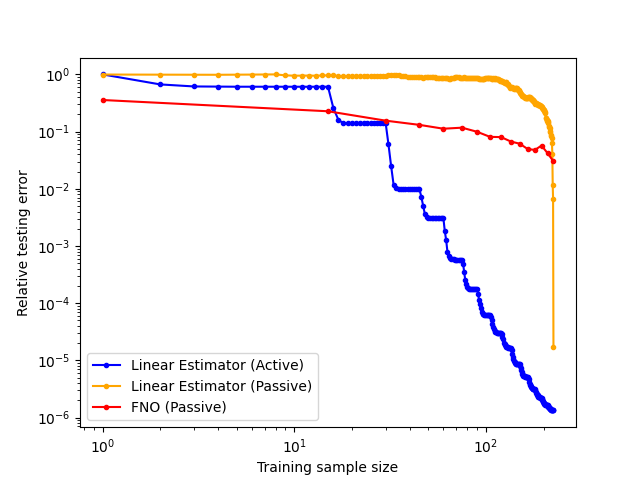}
\caption{Error Plots for Poisson Equation in $\log$-$\log$ scale.}
\label{fig:Poisson_error_log}
\end{center}

\end{figure}

While the convergence guarantee of our estimator is formally established only for the covariance operator \( 50^2(-\nabla^2 + \identity)^{-\gamma} \) with \( \gamma > 1 \) as $d=2$, we observe that the estimator demonstrates robust convergence even when \( \gamma \leq 1 \) in the context of Poisson equation. Figure \eqref{fig:Poisson_gammas} presents the convergence rate of our estimator in log scale, using actively collected data across various values of \( \gamma \).

\begin{figure}[H]
\begin{center}
\includegraphics[width=0.6\linewidth]{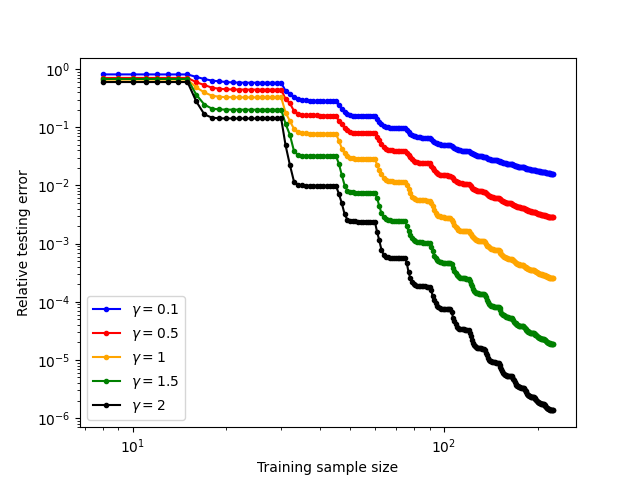}
\caption{Convergence rate of the active linear estimator for Poisson equation with actively collected data for different values of \( \gamma \).}
\label{fig:Poisson_gammas}
\end{center}
\end{figure}

\subsection{Heat Equation}
Consider the heat equation
\[
\frac{\partial u}{\partial t} = \tau \, \nabla^2 u,
\]
where \( u: [0,1]^2 \to \mathbb{R} \) vanishes on the boundary. The solution operator for this equation is given by \( \exp(\tau t \nabla^2) \), and the solution at time \( t \geq 0 \) can be expressed as \( u_t = \exp(\tau t \nabla^2) u_0 \). Fixing \( t = 1 \), our objective is to learn the solution operator \( \exp(\tau \nabla^2) \). This operator is defined as 
\[
\exp(\tau \nabla^2) = \sum_{k=0}^\infty \frac{(\tau \nabla^2)^k}{k!},
\]
which is a bounded linear operator. As in the previous case, we sample \( n \) initial conditions \( u_0 \sim \text{GP}(0, (-\nabla^2 + \mathbf{I})^{-1.5}) \). For each initial condition, we use the finite difference method with forward-time discretization to compute the solution \( u_1 \) at \( t = 1 \). This is done using \( 1000 \) time discretization steps on a \( 64 \times 64 \) grid.
For our experiments, we set \( \tau = 10^{-2} \). As \( \tau \) is the step size in the forward Euler method, choosing a larger \( \tau \) would result in instability in the numerical PDE solver.

\begin{figure}[H]
\begin{center}
\includegraphics[width=0.6\linewidth]{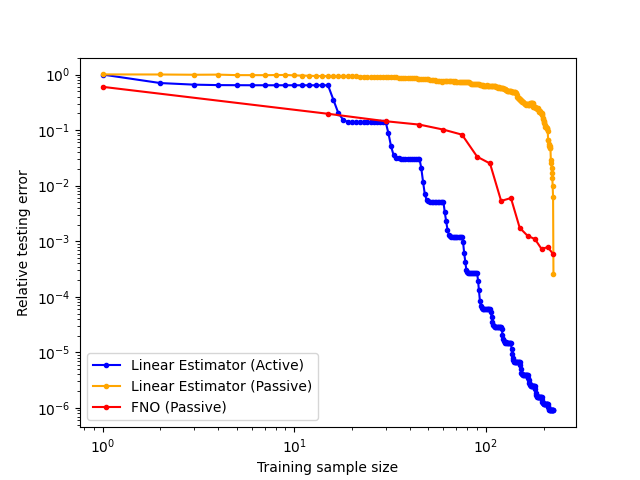}
\caption{Error Plots for the Heat equation in $\log$-scale. }
\label{fig:heat_error}
\end{center}
\end{figure}

All estimators are evaluated on a test set of size \(100\), drawn from the same distribution as the training data. Figure \eqref{fig:heat_error} presents the relative testing errors. Furthermore, the error plot for the Fourier Neural Operator (FNO) trained on actively collected data is shown in Figure \eqref{fig:heat_appdx_log} in the Appendix. Finally, Figure \eqref{fig:heat_appdx_gammas} in the Appendix shows the convergence rates of the active linear estimator for different values of \(\gamma\).

 Our experimental results verify the theoretical advantage of active data collection strategies over passive sampling, as established in Theorem \ref{thm:upper}. These findings highlight the practical utility of active learning frameworks in improving data efficiency for operator learning tasks.




\section{Discussion and Future Work}
In this work, we show that arbitrarily fast rates can be achieved with an active data collection strategy when the operator of interest is a bounded linear operator and the input functions are drawn from centered distributions with continuous covariance kernels. A natural extension of these results would involve non-linear operators. Specifically, one might ask whether there exists a natural class of non-linear operators that permits such fast rates when input functions are drawn from centered distributions with continuous covariance kernels. A natural starting point might be to consider the RKHS of operators. Additionally, given that functional PCA is the estimation of truncated Karhunen–Lo\`{e}ve decomposition, it would be interesting to explore whether a variant of a PCANet-based architecture could achieve fast rates with active data collection.

\bibliography{references}
\bibliographystyle{plainnat}

\newpage
\appendix

\section{Proof of Theorem \ref{thm:upper}}\label{appdx:proof_upper}

\subsection{Specifying Data Collection Strategy and The Estimator}\label{appdx:estimator}
We first specify the estimator that achieves the claimed guarantee in Theorem \ref{thm:upper}. Let $\{\lambda_j, \varphi_j\}_{j=1}^{\infty}$ be the sequence of eigenpairs of $K$ defined by solving the Feldholm integral equation
    \[\int_{\Xcal}K(y,x)\, \varphi_j(x)\, d\nu(x) = \lambda_j \, \varphi_j(y), \quad \quad y,x \in \Xcal.\]
Given the Oracle call budget of $n$, the input functions that the learner selects are $\varphi_1, \varphi_2, \ldots, \varphi_n$ as source terms. For each $i \in [n]$, the learner makes an oracle call and obtain 
\[w_i = \Ocal(\varphi_i).\]
Consider the estimation rule
\[ \argmin_{L \text{ is linear}}\,  \sum_{i=1}^n \norm{L\varphi_i - w_i}_{L^2}^2.\]
Solving this optimization problem boils down to solving the linear equation
\[\sum_{i=1}^n w_i \otimes \varphi_i = L \circ \left( \sum_{i=1}^n \varphi_i \otimes \varphi_i \right).\]
It is clear that this system is ill-posed and has infinitely many solutions. The family of solutions can be written as 
\[L = \left(\sum_{i=1}^n w_i \otimes \varphi_i\right) \left(  \sum_{i=1}^n \varphi_i \otimes \varphi_i\right)^{\dagger}, \]
where $\dagger$ indicates the pseudoinverse. Each particular choice of pseudoinverse yields a distinct solution. Since $\varphi_i$'s are orthonormal, a natural one is
\[\left(  \sum_{i=1}^n \varphi_i \otimes \varphi_i\right)^{\dagger} =  \sum_{i=1}^n \varphi_i \otimes \varphi_i.\]
This choice of pseudoinverse yields the estimator
\[\widehat{\Fcal}_n:= \sum_{i=1}^n w_i \otimes \varphi_i,\]
which will be our estimator interest.

\subsection{Rewriting Risk using Karhunen–Lo\`{e}ve Theorem}
Next, we bound the risk of this estimator.
Pick any $v \sim \mu$. Since $v$ is defined using a centered and squared-integrable stochastic process with continuous covariance kernel $K$, the celebrated Karhunen–Lo\`{e}ve Theorem \citep[Theorem 7.3.5]{hsing2015theoretical} states that
\[v(\cdot) = \sum_{j=1}^{\infty}\, \sqrt{\lambda_j}\, \xi_j \, \varphi_j(\cdot), \]
where $\xi_j$'s are random variables defined as  
\[\xi_j := \frac{1}{\sqrt{\lambda_j}}\, \int_{\Xcal} v_x(\omega)\, \varphi_j(x)\, d\nu(x). \]
Here, $v_x(\omega)$ is simply just $v(x)$, but we write the dependence on $\omega$ explicitly to highlight the fact that $v$ is generated by stochastic process on the probability space $(\Omega, \Sigma, \mathbf{P})$.

It turns out that $\xi_j$'s are uncorrelated random variables with mean $0$ and variance $1$. In particular, we have
\[\expect[\xi_j]=0 \quad \text{ and } \quad \expect[\xi_i\, \xi_j ] = \indicator[i= j]. \]
The precise convergence statement is
\begin{equation}\label{eq:KLconvergence}
    \lim_{m \to \infty}\, \sup_{x \in \Xcal} \expect\left[ \left|v(x)- \sum_{j=1}^{m} \sqrt{\lambda_j}\,  \xi_j \varphi_j(x)\right|^2\right]=0.
\end{equation}
We refer the reader to standard texts \citep[Theorem 7.3.5]{hsing2015theoretical} or \citep[Theorem 7.52]{lord2014introduction} for the full proof of Karhunen–Lo\`{e}ve Theorem.

Fix $m \in \naturals$ such that $m >n$ and define $\Pi_m$ to be a projection operator onto the first $m$ eigenfunctions of $K$. That is, for each $v$ with Karhunen–Lo\`{e}ve decomposition $v(\cdot) = \sum_{j=1}^{\infty}\, \sqrt{\lambda_j}\, \xi_j \, \varphi_j(\cdot) $, we define
\[\Pi_m(v) := \sum_{j=1}^{m} \sqrt{\lambda_j}\, \xi_j \varphi_j(\cdot). \]
Since both $\widehat{\Fcal}_n$ and $\Fcal$ are linear operators, we can write
\begin{equation*}
    \begin{split}
     &\expect_{v \sim \mu}\left[ \norm{\widehat{\Fcal}_n(v) - \Fcal(v)}_{L^2}^2\right]\\
     &= \expect_{v \sim \mu}\left[\norm{\widehat{\Fcal}_n\big(\Pi_m(v) \big) - \Fcal \big(\Pi_m(v) \big) 
 + \big(\widehat{\Fcal}_n -\Fcal \big)\big(v - \Pi_m(v) \big)}_{L^2}^2 \right]  \\
 &\leq\expect_{v \sim \mu}\left[\norm{\widehat{\Fcal}_n\big(\Pi_m(v) \big) - \Fcal \big(\Pi_m(v) \big) 
 }_{L^2}^2 \right] + 2 \expect\left[\norm{\widehat{\Fcal}_n\big(\Pi_m(v) \big) - \Fcal \big(\Pi_m(v) \big) 
 }_{L^2} \norm{\big(\widehat{\Fcal}_n -\Fcal \big)\big(v - \Pi_m(v) \big)}_{L^2} \right] \\
&+ \expect\left[ \norm{\big(\widehat{\Fcal}_n -\Fcal \big)\big(v - \Pi_m(v) \big)}_{L^2}^2 \right]
    \end{split}
\end{equation*}
The inequality follows upon using triangle inequality and expanding the square. For the cross term, Cauchy–Schwarz inequality yields
\begin{equation*}
    \begin{split}
        \expect&\left[\norm{\widehat{\Fcal}_n\big(\Pi_m(v) \big) - \Fcal \big(\Pi_m(v) \big) 
 }_{L^2} \norm{\big(\widehat{\Fcal}_n -\Fcal \big)\big(v - \Pi_m(v) \big)}_{L^2} \right]\\
 &\leq \sqrt{  \expect\left[\norm{\widehat{\Fcal}_n\big(\Pi_m(v) \big) - \Fcal \big(\Pi_m(v) \big) 
 }_{L^2}^2 \right]} \, \, \sqrt{  \expect\left[\norm{\big(\widehat{\Fcal}_n -\Fcal \big)\big(v - \Pi_m(v) \big)}_{L^2}^2 \right]}.
    \end{split}
\end{equation*}
Thus, we can write
\[\expect_{v \sim \mu}\left[ \norm{\widehat{\Fcal}_n(v) - \Fcal(v)}_{L^2}^2\right] \leq (\text{I}) \,+\, 2\sqrt{(\text{I})\, (\text{II})} + (\text{II}), \]
where we define
\begin{equation*}
    \begin{split}
        \text{(I)} &:= \expect_{v \sim \mu}\left[\norm{\widehat{\Fcal}_n\big(\Pi_m(v) \big) - \Fcal \big(\Pi_m(v) \big) 
 }_{L^2}^2 \right] \\
 \text{(II)} &:= \expect\left[ \norm{\big(\widehat{\Fcal}_n -\Fcal \big)\big(v - \Pi_m(v) \big)}_{L^2}^2 \right].
    \end{split}
\end{equation*}
Next, we will bound $(\text{I})$ and $(\text{II})$ separately. 

\subsection{Bounding \text{(I)}.}
Pick any $v \sim \mu$. Then, we know that there exists $\{\xi_j\}_{j \in \naturals}$ such that $v = \sum_{j=1}^{\infty}\, \sqrt{\lambda_j}\, \xi_j \varphi_j$. So, 
$\Pi_m(v) = \sum_{j=1}^m \sqrt{\lambda_j}\, \xi_j \varphi_j$, which subsequently implies that
\[\widehat{\Fcal}_n\big(\Pi_m(v)\big) =\widehat{\Fcal}_n \left( \sum_{j=1}^m \sqrt{\lambda_j}\, \xi_j \,\varphi_j \right) = \sum_{j=1}^m \sqrt{\lambda_j}\, \xi_j \, \widehat{\Fcal}_n(\varphi_j)\, = \sum_{j=1}^n \sqrt{\lambda_j}\, \xi_j \, w_j ,\]
where the final equality uses the fact that $n <m$ and $\widehat{\Fcal}_n(\varphi_j)=0$ for all $j>n$. Defining $\delta_i := \Ocal(\varphi_i) - \Fcal(\varphi_i) $, we obtain $w_i = \Fcal(\varphi_i) + \delta_i$. This allows us to write
\begin{equation*}
    \begin{split}
       \widehat{\Fcal}_n\big(\Pi_m(v)\big)  &=  \sum_{i=1}^n \sqrt{\lambda_i} \, \xi_i \left( \Fcal(\varphi_i) + \delta_i\right)\\
       &= \Fcal \left(\sum_{i=1}^n \sqrt{\lambda_i} \, \xi_i \varphi_i \right) + \sum_{i=1}^n \sqrt{\lambda_i}\, \xi_i \delta_i \\
       &= \Fcal\left( \Pi_m(v)\right) - \Fcal\left( \sum_{j=n+1}^m \sqrt{\lambda_j} \, \xi_j \varphi_j\right) + \sum_{i=1}^n \sqrt{\lambda_i}\, \xi_i \delta_i.
    \end{split}
\end{equation*}
So, we can rewrite $\text{(I)}$ as
\begin{equation*}
    \begin{split}
      &\expect_{v \sim \mu}\left[\norm{\widehat{\Fcal}_n\big(\Pi_m(v) \big) - \Fcal \big(\Pi_m(v) \big) 
 }_{L^2}^2 \right]\\
 &= \expect_{\xi}\left[ \norm{\sum_{i=1}^n \sqrt{\lambda_i} \, \xi_i \,\delta_i - \Fcal\left( \sum_{j=n+1}^{m} \sqrt{\lambda_j} \xi_j  \, \varphi_j \right) }_{L^2}^2\right]\\
        &=  \expect_{\xi}\left[\norm{\sum_{i=1}^n \sqrt{\lambda_i} \, \xi_i \,\delta_i}_{L^2}^2\right] - 2\expect_{\xi}\left[ \inner{\sum_{i=1}^n \sqrt{\lambda_i} \, \xi_i \,\delta_i}{\Fcal\left( \sum_{j=n+1}^{m} \sqrt{\lambda_j} \xi_j  \, \varphi_j \right) }\right] + \expect_{\xi}\left[ \norm{\Fcal\left( \sum_{j=n+1}^{m} \sqrt{\lambda_j} \,\xi_j \,\varphi_j\right) }_{L^2}^2\right].
    \end{split}
\end{equation*}
The cross-term vanishes upon swapping sum and integral as $\xi_i$'s are zero mean and uncorrelated. For the first term, note that
\begin{equation*}
    \begin{split}
       \expect_{\xi}\left[\norm{\sum_{i=1}^n \sqrt{\lambda_i} \, \xi_i \,\delta_i}_{L^2}^2\right] &= \expect_{\xi}\left[\inner{\sum_{i=1}^n \sqrt{\lambda_i} \, \xi_i \,\delta_i}{\sum_{i=1}^n \sqrt{\lambda_i} \, \xi_i \,\delta_i}_{L^2} \right] \\
       &= \sum_{i=1}^n \lambda_i \expect[\xi_i^2] \, \norm{\delta_i}_{L^2}  + 2 \sum_{1 \leq i < j \leq n} \sqrt{\lambda_i \lambda_j} \expect[\xi_i \xi_j] \inner{\delta_i}
       {\delta_j}_{L^2} \\
       &= \sum_{i=1}^{n} \lambda_i \norm{\delta_i}_{L^2}^2 +\, 0 \\
       &\leq \varepsilon^2 \sum_{i=1}^n \lambda_i. 
    \end{split}
\end{equation*}
The final inequality uses the fact that $\Ocal$ is $\varepsilon$-approximate for $\Fcal$.
For the third term, similar arguments show that
\begin{equation*}
    \begin{split}
        \expect_{\xi}\left[ \norm{\Fcal\left( \sum_{j=n+1}^{m} \sqrt{\lambda_j} \,\xi_j \,\varphi_j\right) }_{L^2}^2\right] &\leq \norm{\Fcal}_{\text{op}}^2 \,\, \expect_{\xi}\left[ \norm{ \sum_{j=n+1}^{m} \sqrt{\lambda_j} \,\xi_j \,\varphi_j }_{L^2}^2\right]\\
        &= \norm{\Fcal}_{\text{op}}^2 \, \sum_{j=n+1}^{m} \lambda_j \norm{\varphi_j}_{L^2}^2\\
        &= \norm{\Fcal}_{\text{op}}^2 \, \sum_{j=n+1}^{m}\, \lambda_j. 
    \end{split}
\end{equation*}
Thus, we have established that
\[\text{(I)} \leq \varepsilon^2 \sum_{i=1}^n \lambda_i + \norm{\Fcal}_{\text{op}}^2 \, \sum_{j=n+1}^{m}\, \lambda_j. \]

\subsection{Bounding \text{(II)}}
For any $v \sim \mu$, we have 
\[\expect_{v \sim \mu}\left[\norm{\big(\widehat{\Fcal}_n -\Fcal \big)\big(v - \Pi_m(v) \big)}_{L^2}^2\right]  \leq \norm{\widehat{\Fcal}_n-\Fcal}_{\text{op}}^2 \expect\left[\norm{v - \Pi_m(v)}_{L^2}^2 \right]\]
Let $v = \sum_{j\geq 1}\sqrt{\lambda_j}\, \xi_j \varphi_j$. Then, 
\begin{equation*}
    \begin{split}
        \expect\left[\norm{v - \Pi_m(v)}_{L^2}^2\right] &= \expect \left[\norm{v - \sum_{j=1}^m \sqrt{\lambda_j}\, \xi_j \, \varphi_j}_{L^2}^2 \right]\\
        &= \expect\left[\int_{\Xcal}\, \left(v(x) - \sum_{j=1}^m \sqrt{\lambda_j}\, \xi_j \, \varphi_j(x) \right)^2\, d\nu(x) \right]\\
        &= \int_{\Xcal}\, \expect\left[ \left(v(x) - \sum_{j=1}^m \sqrt{\lambda_j}\, \xi_j \, \varphi_j(x) \right)^2\right]\, d\nu(x)\\
        &\leq \nu(\Xcal)\cdot\,  \sup_{x \in \Xcal}\, \expect\left[ \left(v(x) - \sum_{j=1}^m \sqrt{\lambda_j}\, \xi_j \, \varphi_j(x) \right)^2\right]. 
    \end{split}
\end{equation*}
The third equality uses joint measurability, finiteness of $\nu$, and Tonelli's theorem to exchange the order or integration. 
Therefore, we have established that 
\[\text{(II)} \leq \norm{\widehat{\Fcal}_n-\Fcal}_{\text{op}}^2\cdot\, \nu(\Xcal)\cdot\,  \sup_{x \in \Xcal}\, \expect\left[ \left(v(x) - \sum_{j=1}^m \sqrt{\lambda_j}\, \xi_j \, \varphi_j(x) \right)^2\right]. \]

\subsection{Combining $\text{(I)}$ and $\text{(II)}$}
For each $m > n$, we have established 
\[\expect_{v \sim \mu}\left[ \norm{\widehat{\Fcal}_n(v) - \Fcal(v)}_{L^2}^2\right] \leq (\text{I}) \,+\, 2\sqrt{(\text{I})\, (\text{II})} + (\text{II}), \]
where 
\begin{equation*}
    \begin{split}
        \text{(I)} &\leq  \varepsilon^2 \sum_{i=1}^n \lambda_i + \norm{\Fcal}_{\text{op}}^2 \, \sum_{j=n+1}^{m}\, \lambda_j \\
 \text{(II)} &\leq \norm{\widehat{\Fcal}_n-\Fcal}_{\text{op}}^2\cdot\, \nu(\Xcal)\cdot\,  \sup_{x \in \Xcal}\, \expect\left[ \left(v(x) - \sum_{j=1}^m \sqrt{\lambda_j}\, \xi_j \, \varphi_j(x) \right)^2\right].
    \end{split}
\end{equation*}
It now remains to choose $m$ such that the upperbound is minimized. To that end, we will take $m \to \infty$. Since $w_1, \ldots, w_n \in L^2(\Xcal)$, we must have $\norm{\widehat{\Fcal}_n}_{\text{op}}  <\infty$. As $\Fcal$ is also a bounded linear operator, for any $n \in \naturals$, we must have
\[\norm{\widehat{\Fcal}_n-\Fcal}_{\text{op}}^2  <\infty. \]
Importantly, the norm of $\widehat{\Fcal}_n-\Fcal$ may grow with $n$, but is independent of $m$ and does not grow as $m \to \infty$.  Moreover, as $\nu$ is a finite measure, we must have $\nu(\Xcal)<\infty$. Therefore, Karhunen–Lo\`{e}ve Theorem \citep[Theorem 7.3.5]{hsing2015theoretical} (also see Equation \eqref{eq:KLconvergence}) implies that
\[ \text{(II)} \leq \norm{\widehat{\Fcal}_n-\Fcal}_{\text{op}}^2\cdot\, \nu(\Xcal)\cdot\,  \sup_{x \in \Xcal}\, \expect\left[ \left(v(x) - \sum_{j=1}^m \sqrt{\lambda_j}\, \xi_j \, \varphi_j(x) \right)^2\right] \xrightarrow{m \to \infty} 0. \]
On the other hand, 
\[\text{(I)} \xrightarrow{m \to \infty} \varepsilon^2 \sum_{i=1}^n \lambda_i + \norm{\Fcal}_{\text{op}}^2 \, \sum_{j=n+1}^{\infty}\, \lambda_j.\]
Overall, we have shown that 
\[\expect_{v \sim \mu}\left[ \norm{\widehat{\Fcal}_n(v) - \Fcal(v)}_{L^2}^2\right] \leq  \varepsilon^2 \sum_{i=1}^n \lambda_i + \norm{\Fcal}_{\text{op}}^2 \, \sum_{j=n+1}^{\infty}\, \lambda_j.\]
This completes our proof of Theorem \ref{thm:upper}.

\section{Examples of Covariance Kernels}\label{appdx:examples}
In this section, we build upon and present a more rigorous analysis of the material discussed in Section \ref{sec:examples} of the main text.
\subsection{Fractional Inverse of Shifted Laplacian}\label{example:fisLaplacian}
\citet{li2020fourier} and \citet{kovachki2023neural} generated input functions from $\text{GP}(0, \alpha (-\nabla^2 + \beta \identity)^{-\gamma})$
 for some constants $\alpha, \beta, \gamma>0$. Here, $\nabla^2$ is the Laplacian operator defined as
 \[\nabla^2 v = \sum_{j=1}^d \frac{\partial^2 v}{\partial x_j^2}. \]
 In this section, we will consider $\Xcal$ to be a $d$-dimensional periodic torus $\torus^d$ and the base measure $\nu$ is Lebesgue.  We identify $\torus^d$ by $[0,1]^d$ with periodic boundary conditions.

 Let us define a function $\varphi_m: \torus^d \to \complex $ as $\varphi_m(x) = e^{2\pi \imaginary m \cdot x}$ for every $m \in \integers^d$. Recall that $\varphi_m$ is the eigenfunction of $\nabla^2$ with eigenvalue $-4\pi^2 |m|_2^2$. In particular, 
\[\nabla^2 e^{2\pi \imaginary m \cdot x} = \sum_{j=1}^d \frac{\partial^2}{\partial x_j^2}\, e^{2\pi \imaginary m \cdot x} =  \sum_{j=1}^d (2\pi \imaginary m_j)^2 e^{2\pi \imaginary m \cdot x} = - 4\pi^2 |m|_2^2\, e^{2\pi \imaginary m \cdot x}. \]
Since $\{\varphi_m\, :\, m \in \integers^d\}$ forms a complete orthonormal system in $L^2(\torus^d)$, there are no other eigenfunctions of $ \nabla^2$. A simple algebra shows that $\varphi_m$'s are also he eigenfunctions of shifted Laplacian $-\nabla^2 +\beta \identity$ with eigenvalues being $(\beta +  4\pi^2 |m|_2^2)$. Finally, the spectral mapping theorem implies that $\{(\lambda_m, \varphi_m)\, : \, m \in \integers^d\}$ is the sequence of eigenpairs of $ \alpha (-\nabla^2 + \beta \identity)^{-\gamma}$, where  the eigenvalues are
\[\lambda_m = \alpha \left(\beta + 4\pi^2 |m|_2^2 \right)^{-\gamma}. \]
Next, we need to show that these eigenvalues are summable to use Theorem \ref{thm:upper}. Note that
\[\sum_{m \in \integers^d}\, \lambda_m = \sum_{m \in \integers^d} \alpha \left(\beta + 4\pi^2 |m|_2^2 \right)^{-\gamma} \leq \alpha \beta^{-\gamma} + \frac{\alpha}{(2\pi)^{2\gamma}} \sum_{m \in \integers^d \backslash \{0\}} |m|_{\infty}^{-2\gamma}.   \]
It is easy to see that
\[\sum_{m \in \integers^d \backslash \{0\}} |m|_{\infty}^{-2\gamma} \leq \sum_{j=1}^{\infty} j^{-2\gamma}\, (2j+1)^d \leq 3^d \sum_{j=1}^{\infty}\, j^{-2\gamma +d} <\infty\]
as long as $2\gamma>d$. The first inequality holds because $|\{m \in \integers^d\backslash \{0\} :\, |m|_{\infty} = j\}|\leq 2(2j+1)^{d-1}$. This is true because at least one of the entries has to be $\pm j$ and other $d-1$ entries could be anything in $\{0, \pm 1, \ldots, \pm j\}$. So we have $\sum_{m \in \integers^d} |\lambda_m|<\infty$, implying that the operator $\alpha (-\nabla^2 + \beta \identity)^{-\gamma}$ is in trace class as long as $2\gamma > d$.

Finally, it is easy to see that the operator $ \alpha (-\nabla^2 + \beta \identity)^{-\gamma}$  is integral operator associated with the kernel
\[K(y,x) = \sum_{m \in \integers^d}\, \lambda_m \, \varphi_m(y)\, \overline{\varphi_m(x)}. \]
Upon writing the Fourier series of $v \in L^2(\torus)$, it is obvious that $\Big(\alpha (-\nabla^2 + \beta \identity)^{-\gamma}v\Big)(y) = \int_{\Xcal} K(y,x) v(x)\, dx$ for all $y \in \Xcal$.
As $\sum_{m \in \integers}|\lambda_m|<\infty$, the convergence is absolute and uniform. Since $K$ is a uniform limit of the sum of continuous functions, $K$ must also be continuous. Moreover, as $\lambda_m = \lambda_{-m}$, the kernel $K$ must be real-valued. In particular, we have
\begin{equation*}
    \begin{split}
        K(y,x) &= \sum_{m \in \integers^d}\, \lambda_m \, \varphi_m(y)\, \overline{\varphi_m(x)}\\
        &= \sum_{m \in \integers^d}\frac{\lambda_m}{2}\left( \varphi_m(y)\, \overline{\varphi_m(x)} +  \varphi_{-m}(y)\,\overline{\varphi_{-m}(x)} \right)\\
        &= \sum_{m \in \integers^d}\lambda_m \cos\big(2\pi m\cdot(y-x) \big) \\
        &= \sum_{m \in \integers^d} \lambda_m\big(\cos(2\pi m \cdot y)\, \cos(2\pi m \cdot x) + \sin(2\pi m \cdot y)\, \sin(2\pi m \cdot x)\big)
    \end{split}
\end{equation*}
This is a generalization of the cosine covariance kernel often considered in computational PDE literature (see \citep[Example 5.20]{lord2014introduction}). Since $\cos(\theta)=\cos(-\theta)$, it is obvious that $K$ is symmetric. As $K$ is a continuous and real-valued covariance kernel defined on a bounded domain $\torus^d$, we can use Theorem \ref{thm:upper} for such $K$. In principle, we could use the Fourier modes $\varphi_m$'s as source terms to define the estimator discussed in Section \ref{sec:estimator}. However, the proof of Theorem \ref{thm:upper} assumes that the eigenfunctions of the kernel are real-valued. So, we will first show that we can write the eigenfunctions of $K$ solely using sine and cosine functions without having to use complex exponentials. This allows us to use these sine and cosine functions to define the estimator discussed in Section \ref{sec:estimator}, and invoke results of Theorem \ref{thm:upper}.

\subsubsection{Case $d=1$} When $d=1$, it is easy to see that $ \{1\} \cup \{\sqrt{2} \cos(2\pi j x), \sqrt{2} \sin(2\pi j x)\, :j \in \naturals\}$ are the eigenfunctions of $K$. Writing the expansion of $K$ and using Fubini's to switch the sum and the integral, we get
\begin{equation*}
    \begin{split}
        \int_{\torus}K(y,x) \sqrt{2} \cos(2\pi j x)\, dx  = \lambda_j \sqrt{2}\, \cos(2\pi j y) \frac{1}{2} + \lambda_{-j} \sqrt{2}\, \cos(-2\pi j y) \frac{1}{2}  = \lambda_j \sqrt{2}\cos(2\pi j y).
    \end{split}
\end{equation*}
Note that the first equality holds because $\cos(2\pi jx)$ is orthogonal to all other cosine and sine functions except for $\cos(2\pi j x)$ and $\cos(-2\pi j x)$. The final equality holds because $\lambda_{j}=\lambda_{-j}$ and $\cos(\theta)= \cos(-\theta)$. A similar calculation shows that
\[\int_{\torus}K(y,x) \sqrt{2} \sin(2\pi j x)\, dx  = \lambda_j \sqrt{2}\, \sin(2\pi j y) \frac{1}{2} + \lambda_{-j} \sqrt{2}\, \sin(-2\pi j y)\frac{-1}{2}  = \lambda_j \sqrt{2}\sin(2\pi j y).\]
Finally, we have $\int_{\torus} K(y,x)\, 1\, dx = \lambda_0 1$. Thus, $\lambda_j$ for $j \in \naturals $ are the eigenvalues for sine/cosine functions and $\lambda_0$ for $1$. Since $ \{1\} \cup \{\sqrt{2} \cos(2\pi j x), \sqrt{2} \sin(2\pi j x)\, :j \in \naturals\}$ forms a complete orthonormal system of $L^2(\torus, \reals)$, there cannot be any more eigenfunctions of $K$. Next, we will plug in the values of $\lambda_j$'s in Theorem \ref{thm:upper} to get the precise rates. 

Pick an odd $n \in \naturals$ and suppose the $n$ input terms used to construct the estimator  in Section \ref{sec:estimator} are $\{1\} \cup \{\sqrt{2} \cos(2\pi j x), \sqrt{2} \sin(2\pi j x)\, :j \leq (n-1)/2\}$.
Then, the upperbound is
\begin{equation*}
    \begin{split}
        &\varepsilon^2 \left(\lambda_0+ \sum_{j=1}^{(n-1)/2}2\lambda_j \right) +  \norm{\Fcal}_{\text{op}}^2\, \sum_{j =(n+1)/2}^{\infty} 2 \lambda_j\\
        &\leq    \varepsilon^2  \alpha \beta^{-\gamma} + \varepsilon^2 2\sum_{j=1}^{(n-1)/2} \alpha \left(\beta + 4\pi^2 j^2 \right)^{-\gamma}+ 2 \norm{\Fcal}_{\text{op}}^2\,  \sum_{j=(n+1)/2}^{\infty}  \alpha\, \left(\beta + 4\pi^2 j^2 \right)^{-\gamma}\\
    &\leq \varepsilon^2 \alpha  \beta^{-\gamma} + \varepsilon^2  \alpha  \frac{2}{(4\pi^2)^{\gamma}} \sum_{j=1}^{(n-1)/2} \frac{1}{j^{2\gamma}}  + \frac{2\alpha}{(4\pi^2)^{\gamma}}\, \norm{\Fcal}^2_{\text{op}} \sum_{j = (n+1)/2}^{\infty}  \frac{1}{j^{2\gamma}} \\
    &\leq \varepsilon^2 \alpha \beta^{-\gamma} + \varepsilon^2 \alpha \frac{2}{(4\pi^2)^{\gamma}} + \varepsilon^2 \alpha \frac{2}{(4\pi^2)^{\gamma}} \int_{1}^{(n-1)/2} t^{-2\gamma}\, dt  + \frac{2\alpha}{(4\pi^2)^{\gamma}} \norm{\Fcal}_{\text{op}}^2 \int_{(n-1)/2}^{\infty} t^{-2\gamma}\, dt\\
    &\leq \varepsilon^2 \alpha \beta^{-\gamma} + \varepsilon^2 \alpha \frac{2}{(4\pi^2)^{\gamma}} + \frac{2}{(4\pi^2)^{\gamma}} \frac{\varepsilon^2 \alpha }{2\gamma-1}  + \frac{2\alpha}{(4\pi^2)^{\gamma}}\norm{\Fcal}^2_{\text{op}} \frac{1}{2\gamma-1}\frac{2^{2\gamma-1}}{(n-1)^{2\gamma-1}}, \quad \forall \gamma>\frac{1}{2}.
    \end{split}
\end{equation*}
Since $2\cdot2^{2\gamma-1}\leq (4\pi^2)^{\gamma} $ and $2^{2\gamma-1}\leq (4\pi^2)^{\gamma} $, the overall error is at most
\[\varepsilon^2\left( \alpha \beta^{-\gamma} + \alpha  + \frac{\alpha }{2\gamma-1}\right) +   \frac{\alpha\, \norm{\Fcal}_{\text{op}}^2}{2\gamma-1} \frac{1}{(n-1)^{2\gamma-1}} \quad \quad \text{ for all } \gamma>\frac{1}{2}.\]
Since $\gamma>1/2$, the reducible error goes to $0$ as $n \to \infty$. As an example, \cite{li2020fourier} uses $\alpha = 625$, $\beta = 25$ and $\gamma=2$ in their experiment for $1d$-Burger's equation. In this case,  we get the convergence rate of $ n^{-3}$ for the reducible error. Note that this rate of \emph{cubic order} is faster than the usual passive statistical rate of $1/n$. In fact, for any value $\tau$, one can take $\gamma = (\tau+1)/2$ to get the rate of $n^{-\tau}$. Thus, every polynomial rate is possible depending on the choice of $\gamma$. \\

\subsubsection{Case $d>1$}
Recall that $ \{1\} \cup \{\sqrt{2} \cos(2\pi j x), \sqrt{2} \sin(2\pi j x)\, :j \in \naturals\}$ are the eigenvalues of $K$ for $d=1$ with eigenvalues $\lambda_j := \alpha \left(\beta + 4\pi^2 j^2 \right)^{-\gamma} $. Define a set of functions
\[\Ecal = \prod_{i=1}^d  \{1\} \cup \{\sqrt{2} \cos(2\pi j x_i), \sqrt{2} \sin(2\pi j x_i)\, :j \in \naturals\}.\]
For each element $e \in \Ecal$ , there exists a tuple $j:= (j_1, \ldots, j_d) \in \naturals_0^d$ such that
\[e(x) = \psi_{j_1}(x_1)\ldots \psi_{j_{d-1}}(x_{d-1})\cdot \psi_{j_d}(x_d),\]
where $\psi_{j_i}(x_i)  \in \{\sqrt{2}\cos(2\pi j_i x_i), \sqrt{2}\sin(2\pi j_i x_i)  \}$  for $j_i>0$ and $\sqrt{2}$ for $j_i=0$.
Let us denote the collection of all such functions by $\Ecal_j$. Then, we have $\Ecal = \cup_{j \in \naturals_0^d}\, \Ecal_j$. 
We prove the following result on the eigenpairs of $K$.
\begin{proposition}\label{prop:eigenpair}
    For each $j \in \naturals^d_0$, define $\lambda_j = \alpha \left(\beta + 4\pi^2 |j|_2^2 \right)^{-\gamma} $. Then, 
    \[\bigcup_{j \in \naturals_0^d} \bigcup_{e \in \Ecal_j} \left\{(\lambda_j, e) \right\}\]
is the set of eigenpairs of $K$ on $\torus^d$.
\end{proposition}

We defer the full proof of Proposition \ref{prop:eigenpair} to the end of this subsection.  First, we use Proposition \ref{prop:eigenpair} and Theorem \ref{thm:upper} to get the precise rate for kernel $K$. Pick $r$ such that the source terms used to construct the estimator defined in Section \ref{sec:estimator} are 
\[\bigcup_{j \in \naturals_0^d \,:\,  |j|_{\infty}\leq r }\Ecal_j \]
Note that $|\Ecal_0|=1 $ and $|\Ecal_j| \leq 2^d$ for all $|j|_{\infty}>0$. Thus, there are $n \leq (r+1)^d 2^d$ source terms. Then, the upperbound is
\begin{equation*}
    \begin{split}
        &\leq \varepsilon^2 \lambda_0+ \sum_{0<|j|_{\infty}\leq r} 2^d\, \lambda_j + \norm{\Fcal}_{\text{op}}^2\, \sum_{|j|_{\infty}> r}  2^d\, \lambda_j \\
        &=   \varepsilon^2 \alpha \beta^{-\gamma} +  \varepsilon^2  2^d \sum_{0<|j|_{\infty}\leq r} \alpha \left(\beta + 4\pi^2 |j|_2^2 \right)^{-\gamma}+  2^d \norm{\Fcal}_{\text{op}}^2\, \sum_{|j|_{\infty}> r}  \alpha\, \left(\beta + 4\pi^2 |j|_2^2 \right)^{-\gamma}\\
         &= \varepsilon^2 \alpha \beta^{-\gamma} +  \varepsilon^2  2^d \sum_{0<|j|_{\infty}\leq r} \alpha \left(\beta + 4\pi^2 |j|_{\infty}^2 \right)^{-\gamma}+  2^d \norm{\Fcal}_{\text{op}}^2\, \sum_{|j|_{\infty}> r}  \alpha\, \left(\beta + 4\pi^2 |j|_{\infty}^2 \right)^{-\gamma}\\
        &\leq \varepsilon^2 \alpha \beta^{-\gamma}+      \varepsilon^2 2^d \sum_{k=1}^{r} \alpha \, (\beta + 4\pi^2 k^2)^{-\gamma} \, (k+1)^{d-1}  +  2^d \norm{\Fcal}_{\text{op}}^2\, \sum_{k=r+1}^{\infty} \alpha\, \left(\beta + 4\pi^2 k^2 \right)^{-\gamma} \, (k+1)^{d-1} \\
        &\leq \varepsilon^2  \alpha \beta^{-\gamma} +   \varepsilon^2 2^d \frac{\alpha 2^d}{(4\pi^2)^{\gamma}} \sum_{k=1}^r k^{d-1 - 2\gamma} + 2^d\norm{\Fcal}_{\text{op}}^2 \, \frac{\alpha \, 2^d}{(4\pi^2)^{\gamma}} \sum_{k=r+1}^{\infty} k^{d-1-2\gamma} \\
        &\leq  \varepsilon^2  \alpha \beta^{-\gamma} +\varepsilon^2\frac{\alpha 2^{2d}}{(4\pi^2)^{\gamma}}  + \varepsilon^2 \frac{\alpha 2^{2d}}{(4\pi^2)^{\gamma}} \int_{1}^{r} t^{d-1-2\gamma} \,  dt  +  \norm{\Fcal}_{\text{op}}^2 \, \frac{\alpha \, 2^{2d}}{(4\pi^2)^{\gamma}} \int_{r}^{\infty} t^{d-1-2\gamma}\, dt\\
        &\leq \varepsilon^2  \alpha \beta^{-\gamma} +\varepsilon^2\frac{\alpha 2^{2d}}{(4\pi^2)^{\gamma}}  + \varepsilon^2 \frac{\alpha 2^{2d}}{(4\pi^2)^{\gamma}}  \frac{1}{2\gamma-d}  + \norm{\Fcal}_{\text{op}}^2 \, \frac{\alpha \, 2^{2d}}{(4\pi^2)^{\gamma}}\, \,\frac{1}{2\gamma-d}\,  \frac{1}{r^{2\gamma-d}}, \quad \quad \quad \quad \text{ for all } 2\gamma > d.
    \end{split}
\end{equation*}
Recall that $n \leq (2r+2)^d$. So, we have $n^{1/d}/2-1 \leq r$. For $n^{1/d} \geq 4$, we have $r \geq n^{1/d}/4$. Thus, 
\[\frac{1}{r^{2\gamma-d}} \leq \frac{4^{2\gamma-d}}{n^{\frac{2\gamma}{d}-1}}\]
Note that $(4\pi^2)^{\gamma} = (2\pi)^{2\gamma} \geq 2^{2d} 4^{2\gamma-d}$. Moreover, as $2\gamma>d$, we also have $(4\pi^2)^{\gamma}\geq 2^{2d}$. Therefore, our upper bound is at most
\[\varepsilon^2  \left( \alpha  \beta^{-\gamma} + \alpha + \frac{\alpha }{2\gamma-d}\right) + \frac{\alpha \norm{\Fcal}_{\text{op}}^2 }{2\gamma-d}\, \,  \frac{1}{n^{\frac{2\gamma}{d}-1}}. \]
Since $2\gamma/d-1 >0$, the reducible error above goes to $0$ when $n \to \infty$. 
Again, as an example, \cite{li2020fourier} uses $\alpha = 7^{3/2}$, $\beta = 49$ and $\gamma=2.5$ in their experiment for $2d$-Navier Stokes. In this case, $2\gamma/d= 2.5$, yielding the convergence rate of $ n^{-1.5}$ for the reducible error. Note that this rate is faster than the usual passive statistical rate of $1/n$. However, as usual, for any value $\tau$, one can take $\gamma = d(\tau+1)/2$ to get the rate of $n^{-\tau}$. Thus, every polynomial rate is possible depending on the choice of $\gamma$.

We now end this section by providing the proof of Theorem \ref{prop:eigenpair}.
\begin{proof}[Proof of Proposition \ref{prop:eigenpair}]
     Since $\cup_{j \in \naturals_0^d}\, \cup_{e \in \Ecal_j}  \{e\}$ forms an orthonormal basis of $L^2(\torus^d, \reals)$, there cannot be anymore eigenfunctions of $K$. Thus, it suffices to show that $(\lambda_j, e)$ is an eigenpair for any $e \in \Ecal_j$ and $j \in \naturals_0^d$. To prove this, we will establish that
\begin{equation}\label{eq:fouriereigenfunc}
     \int_{\torus^d}\, \sum_{m \in \integers^d } \indicator\{|m_i|=j_i\,\, \forall i \in [d]\}\cos\big(2\pi m\cdot(y-x) \big) e_j(x)\,  dx= e_j(y),
\end{equation}
where $e_j$ is an arbitrary element of $\Ecal_j$. 
Recall that 
\[ \int_{\torus^d} \cos\big(2\pi m\cdot(y-x) \big) e_j(x)\, dx =0 \text{ if } \exists i \text{ such that } |m_i| \neq j_i.\]
This is true because if $ \exists i \text{ such that } |m_i| \neq j_i$, then we can write
$\cos\big(2\pi m\cdot(y-x) \big) =\cos\big(2\pi \sum_{\ell \neq i}m_{\ell}(y_{\ell}-x_{\ell}) \big) \cos(2\pi m_i (y_i-x_i))- \sin\big(2\pi \sum_{\ell \neq i}m_{\ell}(y_{\ell}-x_{\ell}) \big) \sin(2\pi m_i (y_i-x_i)) $. Moreover, $e_j(x) = \psi_{j_1}(x_1)\ldots \psi_{j_{d-1}}(x_{d-1})\cdot \psi_{j_d}(x_d)$, where $\psi_{j_{\ell}}$'s are either sine, cosine, or a constant function. Our claim follows upon noting that $\psi_{j_i}(x_i)$ is orthogonal to both $\sin(2\pi m_i (y_i-x_i))$ and $\cos(2\pi m_i (y_i-x_i))$.

Thus, Equation \eqref{eq:fouriereigenfunc} together with the fact that $\lambda_m = \lambda_j$ for all $m \in \{ k \in \integers^d \, :\, |k_i|=j_i \quad \forall i \in [d]\}$ implies that $(\lambda_j, e_j)$ is the eigenpair of $K$. As $j \in \naturals_0^d$ and $e_j \in \Ecal_j$ are arbitrary, this  completes our proof.

Now, it remains to prove Equation \eqref{eq:fouriereigenfunc}. We will proceed by induction on $d$. For the base case, take $d=1$. If $j=0$, $e_j = 1$ and our claim follows trivially. Suppose $j\neq 0$. Since $\cos(\theta)=\cos(-\theta)$, we have
\begin{equation*}
    \begin{split}
      \sum_{m \in \integers } \indicator\{|m|=j\}\cos\big(2\pi m(y-x) \big) &= 2\cos(2\pi j (y-x))  \\
      &= 2 \cos(2\pi j y)\, \cos(2\pi j x) + 2\, \sin(2\pi j y) \sin(2\pi j x).
    \end{split}
\end{equation*}
If $e_j(x) = \sqrt{2}\cos(2\pi j x)$, then 
\[\int_{\torus} \left(2 \cos(2\pi j y)\, \cos(2\pi j x) + 2\, \sin(2\pi j y) \sin(2\pi j x) \right) \sqrt{2}\, \cos(2\pi j x)\, dx = \sqrt{2} \cos(2\pi j y). \]
If $e_j(x) = \sqrt{2}\sin(2\pi j x)$, a similar calculation shows that 
\[\int_{\torus} \left(2 \cos(2\pi j y)\, \cos(2\pi j x) + 2\, \sin(2\pi j y) \sin(2\pi j x) \right) \sqrt{2}\sin(2\pi j x)\, dx = \sqrt{2} \sin(2\pi j y).\]
This completes our proof of the base case.

Suppose \eqref{eq:fouriereigenfunc} is true for $d-1$. We will now prove it for $d$. Note that
\begin{equation*}
    \begin{split}
        &\cos\big(2\pi m\cdot(y-x) \big) = \cos\left(2\pi \sum_{i=1}^d m_i (y_i-x_i) \right)\\
        &=\cos\left(2\pi \sum_{i=1}^{d-1} m_i (y_i-x_i) \right)\, \cos\left(2\pi  m_d (y_d-x_d) \right) - \sin\left(2\pi \sum_{i=1}^{d-1} m_i (y_i-x_i) \right)\, \sin\left(2\pi  m_d (y_d-x_d) \right).
    \end{split}
\end{equation*}

First, observe that when summed over all $m \in \integers^d$ such that $|m_i|=j_i$ for all $i \in [d]$, the sine term vanishes. That is,
\begin{equation*}
    \begin{split}
        & \sum_{m \in \integers^d } \indicator \big\{|m_i|=j_i\,\, \forall i \in [d] \big\}\left( \sin\left(2\pi \sum_{i=1}^{d-1} m_i (y_i-x_i) \right)\, \sin\left(2\pi  m_d (y_d-x_d) \right)\right)\\
         &= \left(\sum_{m \in \integers^{d-1} } \indicator \big\{|m_i|=j_i\,\, \forall i \in [d-1] \big\}\, \sin\left(2\pi \sum_{i=1}^{d-1} m_i (y_i-x_i) \right)\right)  \left(\sum_{m_d \in \integers } \indicator\{|m_d|=j_d\}\, \sin\left(2\pi  m_d (y_d-x_d) \right)\right) \\
         &=0.
    \end{split}
\end{equation*}
The final step follows here because the term in the second parenthesis above is always $0$. There are two cases to consider. If $j_d=0$, the summand only has one term and our claim holds as $\sin(0)=0$. On the other hand, if $j_d\neq 0$, then we are have $\sin(\theta) + \sin(-\theta)=0$.

Therefore, we obtain 
\begin{equation*}
 \begin{split}
   &\sum_{m \in \integers^d } \indicator\{|m_i|=j_i\,\, \forall i \in [d]\}\cos\big(2\pi m\cdot(y-x) \big)\\
   &=   \left(\sum_{m \in \integers^{d-1} } \indicator\{|m_i|=j_i\,\, \forall i \in [d-1]\}\, \cos\left(2\pi \sum_{i=1}^{d-1} m_i (y_i-x_i) \right)\right)  \left(\sum_{m_d  \in \integers } \indicator\{|m_d|=j_d\}\, \cos\left(2\pi  m_d (y_d-x_d) \right)\right) 
 \end{split}
\end{equation*}
A similar factorization can be done for $e_j$ to write
\[e_j(x) = \psi_{j_1}(x_1)\ldots \psi_{j_d}(x_d), \]
where $\psi_{j_i}$'s are either sine, cosine, or a constant function.

However, $\psi_{j_d}$ is some $e_{j_d}$ defined on $\torus$. Thus, using the base case, we have
\[\int_{\torus} \sum_{m_d \in \integers } \indicator\{|m_d|=j_d\}\, \cos\left(2\pi  m_d (y_d-x_d) \right)  \,\psi_{j_d}(x_d)\, dx_d = \psi_{j_d}(y_d). \]
Similarly, using the induction hypothesis, we have
\begin{equation*}
    \begin{split}
      & \int_{\torus^{d-1}}\,  \left(\sum_{m \in \integers^{d-1} } \indicator\{|m_i|=j_i\,\, \forall i \in [d-1]\}\, \cos\left(2\pi \sum_{i=1}^{d-1} m_i (y_i-x_i) \right)\right)\, \prod_{i=1}^{d-1}\psi_{j_i}(x_i)\,\, d(x_1, \ldots, x_{d-1}) \\
       &=  \prod_{i=1}^{d-1}\psi_{j_i}(y_i). 
    \end{split}
\end{equation*}
Combining everything, we obtain
\[ \int_{\torus^d}\, \sum_{m \in \integers^d } \indicator\{|m_i|=j_i\,\, \forall i \in [d]\}\cos\big(2\pi m\cdot(y-x) \big) \prod_{i=1}^d \psi_{j_i}(x_i)\,\, dx = \prod_{i=1}^d \psi_{j_i}(y_i).\]
The final step requires using the factorization of cosine and writing integral over $\torus^d$ as product of integral over $\torus^{d-1}$ and $\torus$. This completes our induction step, and thus the proof. 
\end{proof}

\subsection{RBF Kernel on $\reals$.}
Let $K$ be the RBF kernel on $\reals$. That is, $K(x,y) = \exp\left( - \frac{1}{2\ell^2} |x-y|^2\right)$ for all $ x,y \in \reals.$ For now, let $\nu$ is a Gaussian measure with mean $0$ and variance $\sigma^2$ on $\reals$.
Then, it is known \citep[Section 4.3.1]{williams2006gaussian} that $K(x,y) = \sum_{j=0}^{\infty}\, \lambda_j \, \varphi_j(x)\, \varphi_j(y),$
where 
\begin{equation*}
    \begin{split}
        \lambda_j &:= \sqrt{\frac{2a}{a+b+c}}\,  \left(\frac{b}{a+b+c} \right)^j \\
\varphi_j(x) &:= \exp(-(c-a)x^2)\, H_j(\sqrt{2c x}). 
    \end{split}
\end{equation*}
Here,  $a = (4\sigma^2)^{-1}$, $b=(2\ell^2)^{-1}$, $c = \sqrt{a^2 +2ab}$, and $H_j(\cdot)$ is the Hermite polynomial of order $j$ defined as 
\[H_j(x) = (-1)^j \exp(x^2)\, \frac{d^j}{dx^j} \exp(-x^2). \]

Note that this is the eigenpairs of $K(y,x)$ over the entire $\reals$, whereas we need eigenpairs over some compact domain $\Xcal \subseteq \reals$. 
The eigenpairs of $K(y,x)$ are generally not available in closed form for 
arbitrary $\Xcal$. However,  the variance of the Gaussian measure $\sigma^2$
can be tuned appropriately  to localize the domain $\reals$ to appropriate $\Xcal$ of interest. For example, let $\Xcal = [-1,1]$. Then, 
\[\int_{-1}^1 K(y,x) \varphi_j(x)\, d\nu(x) = \int_{\reals} K(y,x)\, \varphi_j(x)\, d\nu(x) - \int_{|x|>1}K(y,x)\, \varphi_j(x)\, d\nu(x).  \]

Since $\int_{\reals} K(y,x)\, \varphi_j(x)\, d\nu(x) = \lambda_j \varphi_j(y)$, we have
\begin{equation*}
    \begin{split}
      \left| \int_{-1}^1 K(y,x) \varphi_j(x)\, d\nu(x) -\lambda_j \varphi_j(y) \right| &\leq \int_{|x|>1} |K(y,x)|\, |\varphi_j(x)|\, d\nu(x)\\
      &\leq \sqrt{\int_{|x|>1}|K(y,x)|^2 \, d\nu(x)\, }\sqrt{\int_{|x|>1}  |\varphi_j(x)|^2\, d\nu(x)}\\
      &\leq \sqrt{ \int_{|x|>1} \exp\left(- \frac{|x-y|^2}{\ell^2} \right)\,\, d \nu(x) },
    \end{split}
\end{equation*}
where the second term is upper bounded by $1$ as $\varphi_j^2$ integrates to $1$ over the whole domain $\reals$. Note that $\exp\left(- \frac{|x-y|^2}{\ell^2} \right) \leq 1$ and $\sqrt{\nu([-1,1]^{c})}\leq 3.9 \times 10^{-12}$ when $\sigma=0.1$. So, $\sigma$ can be appropriately tuned such that $(\lambda_j, \varphi_j)_{j \geq 1}$ is a good approximation of the eigenpair of $K$ for our domain $\Xcal$ of interest. Next, we use these eigenvalues to study how the upper bound in Theorem \ref{thm:upper} decays as $n \to \infty$.

Let $\gamma := b/(a+b+c)$. It is clear that $\gamma\in (0,1)$. Since $c = \sqrt{a^2+ 2ab} \geq a$, we also have $\sqrt{\frac{2a}{a+b+c}} \leq 1$. Thus, we obtain $\lambda_j \leq \gamma^j$.  
Plugging this estimate in the upperbound of Theorem \ref{thm:upper}, we obtain
\begin{equation*}
  \begin{split}
     \varepsilon^2 \sum_{i=0}^{n-1} \lambda_i + \norm{\Fcal}_{\text{op}}^2 \sum_{i = n}^{\infty} \lambda_i &\leq \varepsilon^2 \sum_{i=0}^{n-1} \gamma^{ j}\, +\norm{\Fcal}_{\text{op}}^2 \sum_{i=n}^{\infty} \gamma^j\\
      &= \frac{1-\gamma^n}{1-\gamma} \varepsilon^2 + \norm{\Fcal}_{\text{op}}^{2}\, \frac{\gamma^n}{1-\gamma}\\
      &\leq \frac{1}{(1-\gamma)}\left( \varepsilon^2 + \norm{\Fcal}_{\text{op}}^2 \gamma^n\right).
  \end{split}
\end{equation*}
Therefore,  the reducible error vanishes exponentially fast as $n \to \infty$.

\subsection{RBF Kernel on $\reals^d$}
Let $K(y,x) = \exp(-|x-y|_2^2/(2\ell^2))$, where $x,y \in \reals^d$. Then, it is clear that
\[K(y,x) = \prod_{i=1}^d \exp(-|x_i-y_i|^2/(2\ell^2)) =: \prod_{i=1}^d K_i(y_i, x_i). \]
If $(\lambda_{ij}, \varphi_{ij})_{j \in \naturals}$ are the eigenpairs of $K_i$ under the weighted measure  standard Gaussian measure on $\reals$, then 
\[\left\{ \left(\prod_{i=1}^d \lambda_{ij_i}\,\, ,\,\prod_{i=1}^d\varphi_{ij_i}\right) \, \bigg| (j_1, j_2, \ldots, j_d) \in \naturals_0^d\right\}\]
are the eigenpairs of $K$ when $\nu$ is multivariate Gaussian with mean $0$ and covariance $\sigma^2 \mathbf{I}$. This follows immediately upon noting that
\[\int_{\reals^d}K(y,x) \, \prod_{i=1}^d\varphi_{ij_i}(x_i)\, d\nu(x)= \prod_{i=1}^d \int_{\reals}K_i(y_i, x_i)\, \varphi_{ij_i}(x_i)\, d\nu(x_i)= \prod_{i=1}^d \lambda_{ij_i}\, \varphi_{i j_i}(x_i).\]
Finally, these are the only eigenpairs because the product functions $\prod_{i=1}^d\varphi_{ij_i} $ for all possible $j_1,\ldots, j_d\in \naturals_0$ form a complete orthonormal system of $L^2(\reals^d)$ under the base measure $\nu$. 

Pick $m$ such that $m > d$, and suppose the $n$ source terms  in Theorem \ref{thm:upper} are $\{\varphi_{ij}\, :\, i \in[d]\text{ and } 0\leq j \leq m-1\}$. That is, we have $n =m^d$ source terms. So, the upperbound is
\begin{equation*}
    \begin{split}
        &\varepsilon^2 \sum_{j_1=0}^{m-1}\, \ldots \sum_{j_d=0}^{m-1}\prod_{i=1}^d \lambda_{ij_i} \, + \, \norm{\Fcal}_{\text{op}}^2 \sum_{\substack{(j_1, \ldots, j_d) \in \naturals_0^d \\ \max\{j_1, \ldots, j_d\} \geq m }}\, \, \prod_{i=1}^d \lambda_{ij_i}
    \end{split}
\end{equation*}
The first summation is 
\[\sum_{j_1=0}^{m-1}\, \ldots \sum_{j_d=0}^{m-1}\prod_{i=1}^d \lambda_{ij_i} = \prod_{i=1}^d  \sum_{j_i=0}^{m-1}\lambda_{ij_i} \leq\prod_{i=1}^d  \sum_{j_i=0}^{m-1} \gamma^{j_i} \leq \left(\frac{1-\gamma^m}{1-\gamma}\right)^d\leq \frac{1}{(1-\gamma)^d}. \]
On the other hand, 
\begin{equation*}
    \begin{split}
        \sum_{\substack{(j_1, \ldots, j_d) \in \naturals_0^d \\ \max\{j_1, \ldots, j_d\} \geq m }}\, \, \prod_{i=1}^d \lambda_{ij_i} &\leq \sum_{\substack{(j_1, \ldots, j_d) \in \naturals_0^d \\ \max\{j_1, \ldots, j_d\} \geq m }}\, \,\gamma^{j_1+\ldots +j_d} \leq \sum_{r=m}^{\infty}\, \, r^{d}\, \gamma^{r} \leq \int_{m-1}^{\infty}\, r^{d}\, \gamma^r\, dr.
    \end{split}
\end{equation*}
The second inequality follows because the number of tuple $(j_1, \ldots, j_d)$ that sum to $r$ is $\leq r^{d}$. It is easy to see that the integral converges faster than  $1/n^t$ for every $t \geq 1$. To see this, pick $ t \geq 1$. Then, there exists $c >0$ such that $\gamma^r \leq c \, r^{-dt-1-d}$ . Note that $c$ may depend on $\gamma, d, $ and $t$, but it does not depend on $r$.  Thus, we obtain
\[ \int_{m-1}^{\infty}\, r^{d}\, \gamma^r\, dr \leq c \int_{m-1}^{\infty}\, r^{-dt-1} \, dr = \frac{c}{(m-1)^{dt}}.\]
Since $m = n^{1/d}$, this rate is $c^{\prime}/n^{t}$ for some $c^{\prime}$. That is, our overall upper bound is
\[\varepsilon^2 \frac{1}{(1-\gamma)^d} + \norm{\Fcal}_{\text{op}}^2 \, \frac{c^{\prime}}{n^{t}}. \]
for some $c^{\prime}$ for every $t\geq 1$. Therefore, the reducible error vanishes at a rate faster than every polynomial function of $1/n$.

\subsection{Brownian Motion}
Let us consider the case where $\Xcal =[0,1]$, the base measure $\nu$ is Lebegsue, and the stochastic process in Section \ref{sec:process} is Brownian motion. Recall that the Brownian motion is a Gaussian process with covariance kernel
\[K(s,t) = \min(s,t)\, \quad \quad s,t \in [0,1].\]
It is well-known \citep[Example 4.6.3]{hsing2015theoretical} that the eigenpairs of $K$ is given by
\begin{equation*}
    \begin{split}
        \lambda_j := \frac{1}{\left(j-\frac{1}{2} \right)^2\pi^2} \quad \quad \text{ and } \quad \quad \varphi_j(t) := \sqrt{2}\sin\left(\left(j-\frac{1}{2} \right)\pi t \right) \quad \quad \forall j \in \naturals.
    \end{split}
\end{equation*}
Plugging this  in the upperbound of Theorem \ref{thm:upper} yields the bound
\begin{equation*}
    \begin{split}
        &\varepsilon^2 \sum_{j=1}^n \frac{1}{\left(j-\frac{1}{2} \right)^2\pi^2} + \norm{\Fcal}_{\text{op}}^2 \sum_{j=n+1}^{\infty} \frac{1}{\left(j-\frac{1}{2} \right)^2 \pi^2}\\
        &= \varepsilon^2 \frac{\pi^2}{2}\, \frac{1}{\pi^2} +  \norm{\Fcal}_{\text{op}}^2 \sum_{j=n+1}^{\infty} \frac{1}{\left(j-\frac{1}{2} \right)^2 \pi^2}\\
        &\leq \frac{\varepsilon^2}{2} + \norm{\Fcal}_{\text{op}}^2\, \frac{1}{\pi^2}\, \int_{n}^{\infty}\, \frac{1}{(t-1/2)^2}\, dt\\
        &=  \frac{\varepsilon^2}{2} + \norm{\Fcal}_{\text{op}}^2\, \frac{1}{\pi^2}\, \frac{2}{2n-1}.
        \end{split}
\end{equation*}
Therefore, the reducible error vanishes at rate $\sim \frac{1}{n}$.

\section{Numerical Approximation of Eigenfunctions}\label{appdx:eigenapproximation}
In Section \ref{sec:examples}, we provided analytic expressions for the eigenfunctions of certain covariance kernels. However, for some kernels of interest, closed-form expressions for the eigenfunctions are generally not available. In such cases, numerical approximation is necessary. Here, we will briefly mention some key concepts behind the numerical approximation of eigenfunctions of kernels. The material presented here is based on \citep[Section 4.3.2]{williams2006gaussian}, so we refer the reader to that text for a more detailed discussion and relevant references.  

Let $d\nu(x) \propto p(x)\, dx$ for some density function $p$. For example, if $\nu$ is Lebesgue measure on $[-1,1] \times [-1,1]$, then $p(x) = 1/4$. Then, the solution of Feldolm integral
\[\int_{\Xcal}\, K(y,x)\, \varphi_j(x)\, d\nu(x) = \lambda_j \varphi(y)\]
is approximated using the equation
\[\frac{1}{N} \sum_{i=1}^N K(y, x_i)\, \varphi_j(x_i) =\lambda_j \varphi_j(y). \]
Here, $x_1,x_2, \ldots, x_N$ are iid samples from $p$. Taking $y = x_1,\ldots, x_N$, we obtain a matrix eigenvalue equation
\[\mathbf{K}u_j = \gamma_j \,  u_j,\]
where $\mathbf{K} $ is a $N \times N$ matrix such that $[\mathbf{K}]=K(x_i, x_j)$. The sequence $(\gamma_j, u_j)_{j \geq 1}$ is the eigenpair of $\mathbf{K}$. Then, the estimator for eigenfunctions $\varphi_j$'s and eigenvalues $\lambda_j$'s are
\[\varphi_j(x_i) \sim \sqrt{N}\, [u_j]_i \quad \quad \lambda_j \sim \frac{\gamma_j}{N}. \]
The $\sqrt{N}$ normalization for eigenfunction is to ensure that the squared integral of  $\varphi_j$ on the observed samples is $1$. That is, 
\[\int_{\Xcal} \varphi_j(x) \, \varphi_j(x)\, d\nu(x) \approx \frac{1}{N}\sum_{i=1}^N \varphi_j(x_i) \, \varphi_j(x_i) = \frac{1}{N}\sum_{i=1} \sqrt{N} [u_j]_i \cdot \sqrt{N} \,[u_j]_i = u_j^{\intercal} u_j = 1.  \]
As for the eigenvalues, the proposed estimator is consistent. That is, $\gamma_j/N \to \lambda_j$ when $N \to \infty $ \citep[Theorem 3.4]{baker1979numerical}.

The estimator for eigenfunction only allows evaluation on points $x_1, \ldots, x_N$ used to solve the matrix eigenvalue equation. To evaluate the eigenfunction on arbitrary input, one can use a generalized Nystr{\" o}m-type estimator, defined as
\[\varphi_j(y) \sim \frac{\sqrt{N}}{\gamma_j}\, \sum_{i=1}^N K(y,x_i)\, [u_j]_i.\]

\section{Proof of Lower Bound}\label{appdx:proof_lower}
\begin{proof}
    Let $\{\varphi_j\}_{j \in \naturals}$ be the eigenfunctions of $K$. That is,
\[
\int_{\Xcal} K(y,x)\, \varphi_i(x) \, dx = \lambda_i \, \varphi_i(y) \quad \forall i \in \naturals.
\]


We now construct a hard distribution for the learner. Fix some $p\in (0,1)$ and let $\xi_1, \xi_2, \ldots$ denote the sequence of pairwise independent random variables such that
    \[\xi_j = \begin{cases}
        &-\sqrt{1/p} \quad \text{ with probability } \frac{p}{2}\\
        &0 \quad \quad \quad \text{ with probability } 1-p\\
        &\sqrt{1/p} \quad \text{ with probability } \frac{p}{2}
    \end{cases}.\]
    Given such sequence, define a function $v$ such that 
    \[v(\cdot) = \sum_{j=1}^{\infty}\, \sqrt{\lambda_j}\, \xi_j \, \varphi_j(\cdot).\]

\noindent Note that 
\[\norm{v}_{L^2} = \sum_{j=1}^{\infty}\lambda_j \xi_j^2< \infty\]
   as $\sup_{j \in \naturals}|\xi_j|^2 \leq 1/p$ and $\sum_{j=1}^{\infty} \lambda_j <\infty$. Thus, $v$ is a random element in $ L^2(\Xcal)$. Let $\mu$ denote the probability measure over $L^2(\Xcal)$ induced by the random sequence $\{\xi_j\}_{j \in \naturals}$. It is easy to see that $\expect[v(x)] = 0$ for each $x \in \Xcal$. Moreover, for every $x,y \in \Xcal$, we have 
    \begin{equation*}
        \begin{split}
            \expect[v(x)\, v(y)] &= \expect\left[ \left(\sum_{j=1}^{\infty}\, \sqrt{\lambda_j}\, \xi_j \, \varphi_j(x)\right) \left( \sum_{j=1}^{\infty}\, \sqrt{\lambda_j}\, \xi_j \, \varphi_j(y)\right)\right]\\
            &= \expect\left[\sum_{j=1}^{\infty}\lambda_j\, \xi_j^2 \varphi_j(x)\, \varphi_j(y) +2  \sum_{i<j} \sqrt{\lambda_i \lambda_j}\, \xi_i \xi_j\,  \varphi_i(x)\, \varphi_j(y)\right]\\
            &= \sum_{j=1}^{\infty}\, \lambda_j\, \expect[\xi_j^2]\, \varphi_j(x)\, \varphi_j(y)\\
            &= \sum_{j=1}^{\infty}\, \lambda_j \varphi_j(x)\, \varphi_j(y)\\
            &= K(y,x),
        \end{split}
    \end{equation*}
    where the final equality holds due to Mercer's theorem and the convergence is uniform over $x,y \in \Xcal$.
 Therefore, we have shown that $\mu \in \Pcal(K)$. Let $ \sigma := \{\sigma_j\}_{j \geq 1}$ be a sequence of iid random variables such that $\sigma_j \sim \text{Uniform}(\{-1,1\})$. Fix $c>0$ and for each such $\sigma \in \{-1,1\}^{\naturals}$, define 
    \[\Fcal_{\sigma} := c\, \sum_{j=1}^{\infty}\, \sigma_j\,  \varphi_j \otimes \varphi_j. \]
    For each $m \in \naturals$, we will show that 
    \[ \expect_{\sigma} \left[  \expect_{v_{1:n}  \sim \mu^n} \left[ \expect_{v \sim \mu}\left[ \norm{\widehat{\Fcal}_n(v) - \Fcal_{\sigma}(v)}_{L^2}^2 \right] \right]  \right] \geq \frac{c^2}{2}\sum_{j=1}^m \lambda_j. \]
    Since this holds in expectation, using the probabilistic method, there must be a $\sigma^{\star}$ such that 
    \[  \expect_{v_{1:n}  \sim \mu^n} \left[ \expect_{v \sim \mu}\left[ \norm{\widehat{\Fcal}_n(v) - \Fcal_{\sigma^{\star}}(v)}_{L^2}^2 \right] \right] \geq \frac{c^2}{2}\sum_{j=1}^m \lambda_j.  \]
  Noting that $\norm{\Fcal_{\sigma^{\star}}}_{\text{op}}=c$ completes our proof. The rest of the proof will establish this inequality.

  Since $\{\varphi_j\}_{j \in \naturals}$ is the orthonormal bases of $L^2(\Xcal)$, Parseval's identity implies that 
 \[\norm{\widehat{\Fcal}_n(v) - \Fcal_{\sigma}(v)}_{L^2}^2 = \sum_{j=1}^{\infty}\, \left| \inner{\widehat{\Fcal}_n(v) - \Fcal_{\sigma}(v)}{\varphi_j} \right|^2 = \sum_{j=1}^{\infty}\, \left| \inner{\widehat{\Fcal}_n(v)}{ \varphi_j} - \inner{\Fcal_{\sigma}(v)}{ \varphi_j}\right|^2.\]
Recall that $\Fcal^{\star}(\varphi_j) = c \sigma_j \varphi_j$, where $\Fcal^{\star}$ is the adjoint of $\Fcal$. Thus, for any $v \sim \mu$, we have
\[\inner{\Fcal(v)}{ \varphi_j} = \inner{v}{\Fcal^{\star}(\varphi_j)} = \inner{v}{c \,\sigma_j\,  \varphi_j} = c\,  \sigma_j \sqrt{\lambda_j}\, \xi_j,\]
which subsequently implies
    \[\norm{\widehat{\Fcal}_n(v) - \Fcal_{\sigma}(v)}_{L^2}^2  = \sum_{j=1}^{\infty}\, \left| \inner{\widehat{\Fcal}_n(v)}{ \varphi_j} - c \, \sigma_j\, \sqrt{\lambda_j}\, \xi_j \right|^2.\]
    Using this fact, we can write
    \begin{equation*}
        \begin{split}
            \expect_{\sigma} &\left[  \expect_{v_{1:n}  \sim \mu^n} \left[ \expect_{v \sim \mu}\left[ \norm{\widehat{\Fcal}_n(v) - \Fcal_{\sigma}(v)}_{L^2}^2 \right] \right]  \right]\\
            &=  \expect_{\sigma} \left[  \expect_{v_{1:n}  \sim \mu^n} \left[ \expect_{v \sim \mu }\left[ \sum_{j=1}^{\infty}\, \left| \inner{\widehat{\Fcal}_n(v)}{ \varphi_j} - c\, \sigma_j\, \sqrt{\lambda_j}\, \xi_j \right|^2 \right] \right]  \right]\\
            &=\expect_{v_{1:n}  \sim \mu^n} \left[     \expect_{v \sim \mu }\left[ \expect_{\sigma} \left[\sum_{j=1}^{\infty}\, \left| \inner{\widehat{\Fcal}_n(v)}{ \varphi_j} - c\, \sigma_j\, \sqrt{\lambda_j}\, \xi_j \right|^2 \right] \right]  \right].\
        \end{split}
    \end{equation*}
In the final step, we changed the order of integration. Note that drawing $n$ samples of $v_1, \ldots, v_n$ and drawing $\sigma$ can be done in any order, as they are interchangeable. Finally, the draw of $v \sim \mu$ occurs during the test phase, independent of the previously drawn samples $v_{1:n}$ and $\sigma$.

Next, let $E_{n, m}$ denote the event such that 
    \[\inner{v_i}{\varphi_j} =0 \quad \quad  \forall 1\leq i\leq n \text{ and } 1\leq j\leq m. \]
Then, we will lowerbound
\[\expect_{v \sim \mu }\left[ \expect_{\sigma} \left[\sum_{j=1}^{\infty}\, \left| \inner{\widehat{\Fcal}_n(v)}{ \varphi_j} - c\, \sigma_j\, \sqrt{\lambda_j}\, \xi_j \right|^2 \right] \right] \]
conditioned on the event $E_{n,m}$. First, note that
\begin{equation*}
    \begin{split}
   \expect_{v \sim \mu }\left[ \expect_{\sigma} \left[\sum_{j=1}^{\infty}\, \left| \inner{\widehat{\Fcal}_n(v)}{ \varphi_j} - c\, \sigma_j\, \sqrt{\lambda_j}\, \xi_j \right|^2 \right] \right] &\geq    \expect_{v \sim \mu } \left[ 
 \sum_{j=1}^{m}\, \expect_{\sigma} \left[\left| \inner{\widehat{\Fcal}_n(v)}{ \varphi_j} - c\, \sigma_j\, \sqrt{\lambda_j}\, \xi_j \right|^2 \right] \right]\\
 &\geq  \expect_{v \sim \mu } \left[ 
 \sum_{j=1}^{m}\, \left(\expect_{\sigma} \left| \inner{\widehat{\Fcal}_n(v)}{ \varphi_j} - c\, \sigma_j\, \sqrt{\lambda_j}\, \xi_j \right|\right)^2  \right],
    \end{split}
\end{equation*}
where the final step uses Jensen's inequality.
Next, we use the fact that when the event $E_{n,m}$ occurs, the learner has no information about $\sigma_1, \ldots, \sigma_m$. This is because the input data shows no variation along the directions spanned by $\varphi_1, \ldots, \varphi_m$. Given that $\Ocal$ is the perfect oracle for $\Fcal_{\sigma}$, any information provided by the oracle $\Ocal$ must be independent of how $\Fcal_{\sigma}$ operates on the subspace spanned by $\varphi_1, \ldots, \varphi_m$. Specifically, for every $1 \leq i \leq n$ and $1 \leq j \leq m$, the output of the oracle $\Ocal(v_i)$ must be independent of $\sigma_j$. If this condition holds, then the estimator $\widehat{\Fcal}_n$ must also be independent of $\sigma_1, \ldots, \sigma_m$. Thus, conditioned on the event $E_{n,m}$, for any $1 \leq j\leq m $, we have
   \begin{equation*}
       \begin{split}
         \expect_{\sigma} \left| \inner{\widehat{\Fcal}_n(v)}{ \varphi_j} - c \sigma_j\, \sqrt{\lambda_j}\, \xi_j \right| &= \expect\left[\expect_{\sigma_j} \left[ \left| \inner{\widehat{\Fcal}_n(v)}{ \varphi_j} - c\, \sigma_j\, \sqrt{\lambda_j}\, \xi_j \right| \, \Bigg | \sigma \backslash \{\sigma_j\} \right] \right]\\
         &= \expect \left[
       \frac{1}{2} \left| \inner{\widehat{\Fcal}_n(v)}{ \varphi_j} - c\,  \sqrt{\lambda_j}\, \xi_j \right| + \left| \inner{\widehat{\Fcal}_n(v)}{ \varphi_j} +  c\, \sqrt{\lambda_j}\, \xi_j \right| \right]\\
           &\geq \frac{1}{2} \big|2\, c\, \sqrt{\lambda_j} \xi_j \big|\\
           &= |c \sqrt{\lambda_j}\, \xi_j|. 
       \end{split}
   \end{equation*}
   The first equality uses the fact that conditioned on $\sigma \backslash \{\sigma_j\}$, the function $\widehat{\Fcal}_n(v)$ is independent of $\sigma_j$.
Thus, conditioned on the event $E_{n,m}$, we have shown that
   \[\expect_{v \sim \mu }\left[ \expect_{\sigma} \left[\sum_{j=1}^{\infty}\, \left| \inner{\widehat{\Fcal}(v)}{ \varphi_j} - c\, \sigma_j\, \sqrt{\lambda_j}\, \xi_j \right|^2 \right] \right] \geq \expect_{v \sim \mu}\left[ \sum_{j=1}^{m} c^2 \, \lambda_j \xi_j^2\right] = c^2 \sum_{j=1}^m \lambda_j \expect[\xi_j^2] = c^2 \sum_{j=1}^{m }\lambda_j. \]
 Therefore, our overall lowerbound is 
 \[\expect_{v_{1:n}  \sim \mu^n} \left[     \expect_{v \sim \mu }\left[ \expect_{\sigma} \left[\sum_{j=1}^{\infty}\, \left| \inner{\widehat{\Fcal}(v)}{ \varphi_j} - c\, \sigma_j\, \sqrt{\lambda_j}\, \xi_j \right|^2 \right] \right]  \right] \geq c^2 \left(\sum_{j=1}^m \lambda_j \right) \, \prob\left[E_{n,m} \right] = c^2 (1-p)^{n\cdot m} \sum_{j=1}^{m} \lambda_j.\]
 The final step uses the fact that $\prob[E_{n,m}] = (1-p)^{n \cdot m}. $ It now remains to pick $p$ to obtain the claimed lowerbound. Let us pick $p = \frac{1}{2mn}$. Then, we have $(1-p)^{mn} \geq 1/2$ as long as $n \geq 1$, yielding the lowerbound of 
 \[ \frac{c^2}{2}\sum_{j=1}^m \lambda_j.\]
 Since $m \in \naturals$ is arbitrary, our lowerbound holds for every fixed $m$. Noting that $\norm{\Fcal_{\sigma}}_{\text{op}}=c$ for every $\sigma$ completes our proof.
\end{proof}

\section{Experiments}\label{appdx:experiments}
This section presents additional experimental results using the same setup as described in Section \ref{sec:experiments}. The results show that the Fourier Neural Operator (FNO) performs poorly with actively collected data. This is likely because the training data are not i.i.d. samples from the test distribution, requiring FNO to generalize out of distribution when trained on actively collected data.
\subsection{Poisson Equation}


\begin{figure}[H]
    \centering
    \begin{subfigure}[b]{0.48\linewidth}
        \centering
        \includegraphics[width=\linewidth]{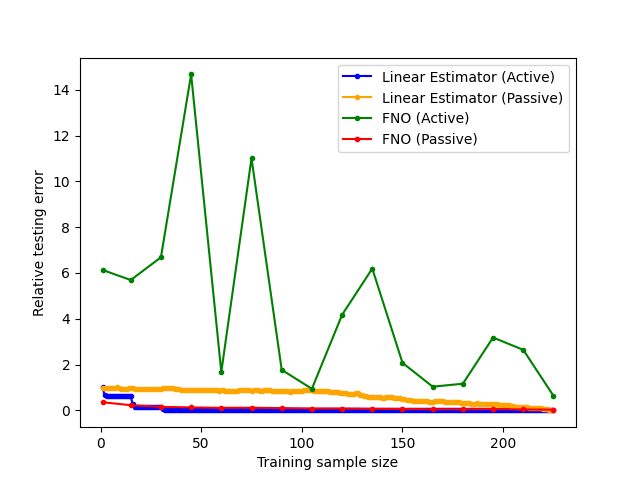}
        \caption{}
    \end{subfigure}
    \hfill
    \begin{subfigure}[b]{0.48\linewidth}
        \centering
        \includegraphics[width=\linewidth]{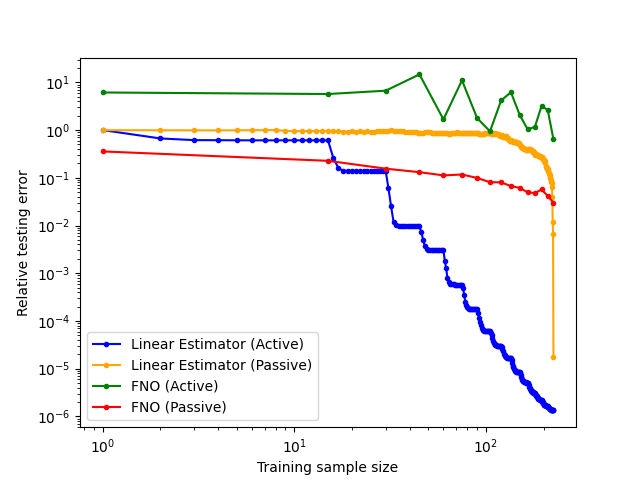}
        \caption{}
    \end{subfigure}
    \caption{Error Plots for various estimators for Poisson Equation. The plot on the right
    shows the same plot in log scale.}
    \label{fig:Poisson_appdx}
\end{figure}

\subsection{Heat Equation}

\begin{figure}[H]
\begin{center}
\includegraphics[width=0.6\linewidth]{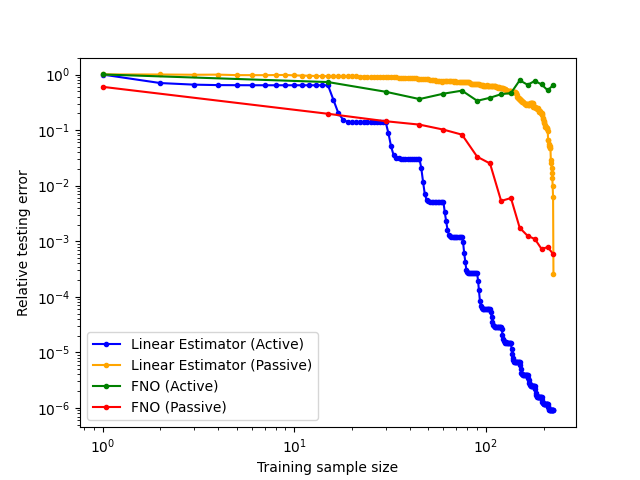}
\caption{Error Plots for various estimators for Heat Equations in $\log$-$\log$ scale.}
\label{fig:heat_appdx_log}
\end{center}
\end{figure}

\begin{figure}[H]
\begin{center}
\includegraphics[width=0.6\linewidth]{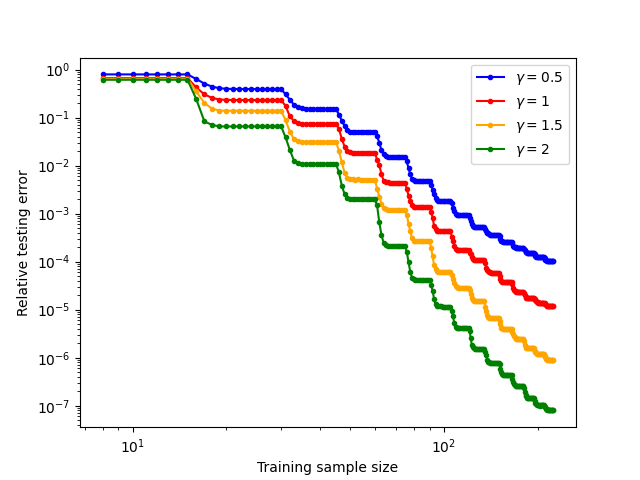}
\caption{Convergence rate of the active linear estimator for Heat equation with actively collected data for different values of \( \gamma \).}
\label{fig:heat_appdx_gammas}
\end{center}
\end{figure}

\end{document}